\PassOptionsToPackage{unicode}{hyperref}
\PassOptionsToPackage{hyphens}{url}
\PassOptionsToPackage{dvipsnames,svgnames,x11names}{xcolor}
\documentclass[12pt]{article}
\usepackage{setspace}
\usepackage{amsmath,amssymb}
\usepackage{iftex}
\usepackage[normalem]{ulem}
\usepackage{bbm}
\usepackage{hyperref}
\ifPDFTeX
  \usepackage[T1]{fontenc}
  \usepackage[utf8]{inputenc}
  \usepackage{textcomp} 
\else 
  \usepackage{unicode-math}
  \defaultfontfeatures{Scale=MatchLowercase}
  \defaultfontfeatures[\rmfamily]{Ligatures=TeX,Scale=1}
\fi
\usepackage{lmodern}
\ifPDFTeX\else  
\fi
\IfFileExists{upquote.sty}{\usepackage{upquote}}{}
\IfFileExists{microtype.sty}{
  \usepackage[]{microtype}
  \UseMicrotypeSet[protrusion]{basicmath} 
}{}
\makeatletter
\@ifundefined{KOMAClassName}{
  \IfFileExists{parskip.sty}{%
    \usepackage{parskip}
  }{
    \setlength{\parindent}{0pt}
    \setlength{\parskip}{6pt plus 2pt minus 1pt}}
}{
  \KOMAoptions{parskip=half}}
\makeatother
\usepackage{xcolor}
\makeatletter
\ifx\paragraph\undefined\else
  \let\oldparagraph\paragraph
  \renewcommand{\paragraph}{
    \@ifstar
      \xxxParagraphStar
      \xxxParagraphNoStar
  }
  \newcommand{\xxxParagraphStar}[1]{\oldparagraph*{#1}\mbox{}}
  \newcommand{\xxxParagraphNoStar}[1]{\oldparagraph{#1}\mbox{}}
\fi
\ifx\subparagraph\undefined\else
  \let\oldsubparagraph\subparagraph
  \renewcommand{\subparagraph}{
    \@ifstar
      \xxxSubParagraphStar
      \xxxSubParagraphNoStar
  }
  \newcommand{\xxxSubParagraphStar}[1]{\oldsubparagraph*{#1}\mbox{}}
  \newcommand{\xxxSubParagraphNoStar}[1]{\oldsubparagraph{#1}\mbox{}}
\fi
\makeatother

\newcommand{\add}[1]{{\color{blue} #1}}

\usepackage{longtable,booktabs,array}
\usepackage{calc} 
\usepackage{etoolbox}
\makeatletter
\patchcmd\longtable{\par}{\if@noskipsec\mbox{}\fi\par}{}{}
\makeatother
\IfFileExists{footnotehyper.sty}{\usepackage{footnotehyper}}{\usepackage{footnote}}
\makesavenoteenv{longtable}
\usepackage{graphicx}
\makeatletter
\def\maxwidth{\ifdim\Gin@nat@width>\linewidth\linewidth\else\Gin@nat@width\fi}
\def\maxheight{\ifdim\Gin@nat@height>\textheight\textheight\else\Gin@nat@height\fi}
\makeatother
\setkeys{Gin}{width=\maxwidth,height=\maxheight,keepaspectratio}
\makeatletter
\def\fps@figure{htbp}
\makeatother

\renewcommand{\hat}{\widehat}
\renewcommand{\tilde}{\widetilde}

\usepackage{amsthm}
\theoremstyle{plain}

\newtheorem{theorem}{Theorem}[section]
\newtheorem{lemma}[theorem]{Lemma}
\newtheorem{proposition}[theorem]{Proposition}
\newtheorem{corollary}[theorem]{Corollary}
\newtheorem{assumption}{Assumption}
\theoremstyle{definition}
\newtheorem{definition}[theorem]{Definition}

\theoremstyle{remark}

\makeatletter
\@ifpackageloaded{caption}{}{\usepackage{caption}}
\AtBeginDocument{%
\ifdefined\contentsname
  \renewcommand*\contentsname{Table of contents}
\else
  \newcommand\contentsname{Table of contents}
\fi
\ifdefined\listfigurename
  \renewcommand*\listfigurename{List of Figures}
\else
  \newcommand\listfigurename{List of Figures}
\fi
\ifdefined\listtablename
  \renewcommand*\listtablename{List of Tables}
\else
  \newcommand\listtablename{List of Tables}
\fi
\ifdefined\figurename
  \renewcommand*\figurename{Figure}
\else
  \newcommand\figurename{Figure}
\fi
\ifdefined\tablename
  \renewcommand*\tablename{Table}
\else
  \newcommand\tablename{Table}
\fi
}
\@ifpackageloaded{float}{}{\usepackage{float}}
\floatstyle{ruled}
\@ifundefined{c@chapter}{\newfloat{codelisting}{h}{lop}}{\newfloat{codelisting}{h}{lop}[chapter]}
\floatname{codelisting}{Listing}

\makeatother
\makeatletter
\@ifpackageloaded{caption}{}{\usepackage{caption}}
\@ifpackageloaded{subcaption}{}{\usepackage{subcaption}}
\makeatother

\ifLuaTeX
  \usepackage{selnolig}  
\fi

\usepackage[style=asa, sorting=nyt]{biblatex}

\usepackage{bookmark}

\IfFileExists{xurl.sty}{\usepackage{xurl}}{} 
\hypersetup{
  pdftitle={Trustworthy Predictive Distributions for Tail Events with Semiparametric Diagnostic Transport Maps},
  pdfauthor={Elizabeth Cucuzzella; Rafael Izbicki; Ann B. Lee},
  pdfkeywords={3 to 6 keywords, that do not appear in the title},
  colorlinks=true,
  linkcolor={blue},
  filecolor={Maroon},
  citecolor={Blue},
  urlcolor={Blue},
  pdfcreator={LaTeX via pandoc}}

\newcommand{\anon}{1}

\begin{document}

\def\spacingset#1{\renewcommand{\baselinestretch}%
{#1}\small\normalsize} \spacingset{1}


\if1\anon
{
  \title{\bf Trustworthy Predictive Distributions for Tail Events with Semiparametric Diagnostic Transport Maps}
  \author{Elizabeth Cucuzzella\hspace{.2cm}\\
    Department of Statistics and Data Science, Carnegie Mellon University\\
    and \\
    Rafael Izbicki \\
    Department of Statistics, Federal University of Sao Carlos \\
    and \\
    Ann B. Lee\\
    Department of Statistics and Data Science, Carnegie Mellon University}
  \maketitle
} \fi

\if0\anon
{
  \bigskip
  \bigskip
  \bigskip
  \begin{center}
    {\LARGE\bf Trustworthy Predictive Distributions for Tail Events with Semiparametric Diagnostic Transport Maps}
\end{center}
  \medskip
} \fi

\bigskip
\doublespacing
\begin{abstract} 

Machine learning forecast systems are moving beyond point predictions to full predictive distributions for future outcomes $y$ conditional on complex inputs $x$. However, these distributions are often locally miscalibrated, especially for high-stakes tail events where accurate uncertainty quantification is most needed to establish trust in models. Local miscalibration occurs because training data often lack examples of low-frequency events. The goal of this paper is to describe a simple, yet flexible framework that produces \emph{interpretable and robust} predictive distributions that are easy to fit and may outperform high-complexity forecasting systems when train examples are limited. With this goal in mind, we introduce a semiparametric version of the Local Amortized Diagnostic and Reshaping (LADaR) framework that posits a covariate-dependent parametric model for a \emph{diagnostic transport map} regressed nonparametrically on inputs to describe how to correct tail probabilities across the feature space to match calibration data. These maps provide the user with local, real-time diagnostics and a recalibrated predictive distribution through an interpretable composition with the base model. We apply these semiparametric diagnostic transport maps to short-term tropical cyclone intensity forecasting to detect evolutionary modes linked to local miscalibration in the National Hurricane Center's forecasts and improve predictions for severe weather hazards.
\end{abstract}

Keywords: predictive uncertainty quantification; local diagnostics; recalibration; conditional density estimation; forecast error distributions

\section{Introduction}
Machine learning research is steadily moving beyond simple point prediction towards methods that also quantify uncertainty. This shift reflects the fact that many real-world decisions require not only a single best guess, but also an understanding of an estimate's reliability. In practice, uncertainty is often summarized through prediction intervals or selected quantiles, commonly trained using scoring rules such as the pinball loss \parencite{gneiting2007, fakoor2023}. These summaries of uncertainties can be useful, but they provide only a partial view of a richer object: {\em the full conditional predictive distribution of future observations}. In particular, prediction intervals and single quantiles do not reveal whether the forecast captures the broader distributional structure correctly, including tail behavior and asymmetry. When decisions depend on these characteristics, the full predictive distribution (PD)---including information about how the distribution of future outcomes varies conditional on specific input sequences---must be reliable and verifiable.

A common strategy in both physics-based forecasting and AI-based generative modeling is to represent uncertainty through an ensemble of point forecasts and then convert that ensemble into a statistical distribution. In numerical weather prediction, for example, multiple runs with perturbed initial conditions produces a collection of plausible outcomes that is then post-processed into a predictive statistical distribution, often in the form of a single point forecast and a parametric ``forecast error distribution'' that is fit to historical data~\parencite{trabing2020}. Similar strategies appear in AI systems, where repeated stochastic queries, sampling-based decoding, tokens or model ensembles are used to generate probabilistic forecasts~\parencite{liu2023, zhou2024, zhao2024}. The latter AI approaches are flexible, but require large amounts of train data, and can be especially unreliable in the tails of the conditional distribution of the response $Y \mid X=x$. Tail events (given $X = x$) are outcomes that lie in the far lower or upper tail of the predictive distribution---such as severe weather, flood, or stock market crash. These events occupy rare, low-frequency regions of the input space, yet they carry the potential for catastrophic, transformational impact. Models trained primarily on historical data often struggle in these low-frequency regions where tail events lie;  as a result, even when a forecast model outputs a full predictive distribution, it may still misrepresent uncertainty precisely where accurate uncertainty quantification and down-stream risk analysis matters most. Moreover, standard statistical and machine learning objectives, including proper scoring rules~\parencite{gneiting2007jasa}, typically provide only global assessments of predictive performance, averaged over all possible inputs. On their own, they do not identify where in the input space the forecast is locally failing, nor do they reveal how the predictive distribution of $Y$ given $X$ should be adjusted as a function of $X$ across all quantiles $\alpha \in (0, 1)$.

In this paper, we take an initial predictive distribution $\hat F(\cdot\mid x)$ as given and treat it as a useful but potentially misspecified base model. Our goal is twofold:
\begin{itemize}
\item \textbf{(Local diagnostics)} We ask for which covariate values $x$ and outcome levels $y$ the base model $\hat F(\cdot\mid x)$ is locally miscalibrated and how those failures can be computed and visualized in a way that is interpretable by human experts.  That is, we ask {\em where} the model can be trusted and {\em how} tail probabilities change across the feature space.
\item \textbf{(Recalibration)} We then ask whether the base model can be reshaped, at deployment time, into a recalibrated distribution $\widetilde F(\cdot\mid x)$ that is closer to the true conditional distribution, especially with respect to conditional calibration, while preserving the structure already encoded in $\hat F$.  That is, rather than directly fitting target data from  scratch, we ask {\em how} to transport the base model toward a target sample.
\end{itemize}

To address both questions, we introduce \emph{diagnostic transport maps}: probability-to-probability maps that quantify, for each $x$, how the base model's probabilities should be adjusted to better match the true distribution of the target, a.k.a ``calibration'' data. These maps can be
weighted to concentrate correction in the tails of the distributions, targeting the regions of the joint $(x,y)$-space in which the base model is least reliable (See Section 2 of Supplemental Material.) This viewpoint is model-agnostic, computationally efficient at deployment---even for complex, high-dimensional data---and aimed directly at achieving the full predictive distribution rather than a single interval or quantile.

 We show that semiparametric diagnostic transport maps provide a practical solution to address the problem of unreliable recalibration when calibration data are limited. Whereas the general Local Amortized Diagnostics and Reshaping (LADaR; ~\parencite{dey2025}) framework can handle arbitrary base models (including AI generative models) and be used to diagnose any initial PD, we emphasize interpretability in this paper by starting with a fixed-variance Gaussian distribution, a common approximation in the physical sciences. We posit a parametric family on the probability-to-probability maps, and regress the parameters using a suitable nonparametric regression (e.g. random forest, gradient boosting, neural network, etc.) This semiparametric yet flexible approach increases interpretability and robustness, and makes the method more accessible to general practitioners and students of statistics alike.
 Both in theory and experiments, we show the gains of our semiparametric maps relative to nonparametric LADaR.

As a running application, we consider short-term tropical cyclone intensity forecasting. Accurate probabilistic guidance for rapid 24-hour intensity changes---severe weather hazards driven by rapid shifts in tail probabilities ---remains difficult for both physics-based and AI-based systems~\parencite{bi2025}. In this setting, it is especially important for human experts to be able to relate forecasts from different guidance tools to observed physical processes and environmental conditions ahead of a rapid intensity change event. In that setting, our framework yields real-time deployable methods with easily digestible local diagnostics, and a mechanism to immediately produce predictive distributions that significantly improve upon the operational base model from the National Hurricane Center (NHC) given the observed sequence of inputs.

Our paper is organized as follows: in Section~\ref{sec:litReview}, we discuss our work in relation to other methods for uncertainty quantification and diagnostics of predictive distributions. In Section~\ref{sec:basics_of_dot}, we outline the basic set-up, introducing the core ideas of local diagnostics and conditional recalibration. We also show that the diagnostic transport map corresponds to an optimal transport. Furthermore, we demonstrate the robustness of the semiparametric approach and give intuition as to its improvements over the fully nonparametric LADaR. This is followed by an analysis of the theoretical properties of semiparametric diagnostic transport maps in Section~\ref{sec:theory}, which demonstrates the  improved rates of tail events relative the nonparametric version. Section~\ref{sec:synthetic_example} illustrates our methods on a synthetic example with a known predictive distribution. Finally, Section~\ref{sec:TCApplication} showcases semiparametric diagnostic transport maps on short-term TC intensity forecasting. We conclude with a discussion of our methodology and some future areas of work in spatio-temporal and multivariate predictive distribution calibration.

\section{Relation to other work}\label{sec:litReview}

\subsection{PIT-based diagnostics}
Classical diagnostics for predictive distributions often rely on the probability integral transform (PIT): if a predictive CDF $\hat F(\cdot\mid x)$ is well calibrated, then $\hat F(Y\mid X)$ should be uniformly distributed~\parencite{gneiting2007}. In practice, however, marginal PIT diagnostics only assess calibration on average and can hide substantial failures in specific parts of the covariate space. The local diagnostic framework of \parencite{zhao2021} addresses this problem by modeling the conditional distribution of the PIT given $X=x$. The framework in \textcite{dey2025} further showed that these local PIT diagnostics can be used to reshape an initial conditional density estimate. The present paper builds on that line of work, but shifts the emphasis to an interpretable and robust semiparametric framework for computing the local PIT diagnostics. This emphasis is motivated by the small-sample regimes that arise for tail events: in precisely those regions of $(x,y)$-space where calibration matters most, calibration data are sparse, and a fully nonparametric estimate of the conditional PIT-CDF can become unstable or noisy.

In this work, we use a parametric transport map at fixed $x$ to impose structure, and to make recalibration feasible even when the relevant events are only weakly represented in the calibration sample. We then use a nonparametric regression with varying coefficients to reveal how the probability-to-probability maps vary across the feature space for all $x$ and for all tail probabilities $(1-\alpha)$ with $\alpha \in [0,1]$.

{\subsection{Scoring rules and distributional evaluation}
Proper scoring rules such as the  continuous ranked probability score (CRPS) are central tools for training and evaluating probabilistic forecasts \parencite{gneiting2011,fakoor2023}. Weighted scoring rules are particularly useful when tail behavior or threshold events deserve extra emphasis. Our approach is complementary rather than an alternative to this literature: scoring rules provide global objectives for fitting and comparing predictive distributions, whereas the transport-map perspective adds a local, $x$-specific diagnostic and recalibration mechanism. In particular, it addresses a question that a global score does not answer directly:  {\em how should a given predictive distribution be reshaped at the current input $x$?}

\subsection{Quantile-based methods and tail probabilities}
Quantile regression and related procedures are used to estimate tail quantiles, prediction intervals, and threshold-specific risks  \parencite{chung2021beyond,feldman2021improving}. In particular, the quantile aggregation framework of \parencite{fakoor2023} is close in spirit to our proposed solution in that it allows the analyst to place greater weight on the quantile regions that matter for the problem at hand. The main difference is the statistical target. Quantile methods focus on estimating selected quantiles, or on combining quantile estimates across levels, whereas diagnostic transport maps recalibrate an entire conditional predictive distribution in one step. Once the transport map is estimated, all quantiles of the recalibrated predictive distribution are available simultaneously as a by-product. This removes the need to fit separate models across quantile levels, avoids quantile crossing by construction, and lets the user recover tail probabilities, prediction intervals, and quantiles from the same corrected predictive distribution.

\subsection{Conformal prediction and predictive sets}
Conformal methods are often used to obtain finite-sample valid prediction sets under exchangeability, typically in the sense of marginal coverage \parencite{Vovk2005,lei2018distribution}. More recent variants aim to improve adaptivity across the covariate space and approximate conditional validity \parencite{romano2019conformalizedQR,chernozhukov2021distributional,izbicki2020Dist-Split,izbicki2022,cabezas2025regression}. There is also related work on conformal predictive distributions, including Mondrian conformal predictive distributions \parencite{bostrom2021mondrian}, which apply category-wise calibration across bins of the initial predictions, and recent flow-based conformal predictive distributions~\parencite{harris2026}, which efficiently sample conformal boundaries via flows and then assemble the boundary samples into a single predictive distribution. Diagnostic transport maps are closer to this distributional line of research, but differ in that our methods learn a smooth, covariate-dependent map $\alpha \mapsto G(\alpha\mid x)$ that both recalibrates the initial predictive distribution and serves as an interpretable diagnostic: its shape reveals how the base model fails locally. Prediction sets at any desired level can then be read off from the recalibrated distribution, whereas prediction sets alone do not recover this local structure.

\section{The basics of diagnostic transport maps}\label{sec:basics_of_dot}

\subsection{Set-up and notation}
We consider a supervised learning setting with covariates $X\in \mathcal{X}\subset \mathbb{R}^d$ and a continuous response $Y\in\mathcal{Y}\subset \mathbb{R}$. We denote the true conditional predictive CDF of $Y$ given $X = x$ by $F(y\mid x) := \mathbb{P}(Y\leq y\mid X = x)$. To diagnose and reshape an initial distribution $\hat{F}$, we assume access to a calibration sample $\mathcal{T}_\text{cal} = \{(X_i, Y_i)\}_{i = 1}^N$, which is independent of the training sample used to produce the base model $\hat{F}$.  See Table~\ref{tab:terminology} in Appendix~\ref{sec:notation} for a list of symbols.

\subsection{Local diagnostics based on PIT}
Our approach analyses the distribution of the probability integral transform (PIT) across the covariate space $\mathcal{X}$. Given a fixed initial predictive distribution $\hat{F}(\cdot \mid x)$, we define a PIT variable for each observation $(X,Y)$,
\[Z := \hat{F}(Y\mid X)\]
That is, for each $(X, Y)$, we evaluate the CDF of the base model at the realized outcome $Y$.

For a fixed covariate value $x\in \mathcal{X}$, we then define the \emph{conditional PIT-CDF} as 
\begin{equation} \label{eq:cond-pit-cdf}
G(\alpha \mid x) 
:= \mathbb{P}\big( Z \le \alpha \mid X = x \big),
\end{equation}
where $\alpha \in [0,1]$. Intuitively, $G(\cdot \mid x)$ describes how the PIT values are distributed at covariate value $x$. It follows from Lemma~\ref{lem:true_conditional_pit} that the base model is perfectly calibrated at $x$ if, and only if, $Z\mid X = x$ is Uniform(0, 1); that is,
\[G(\alpha\mid x) = \alpha \text{ for every } \alpha \in [0, 1].\]
It is the \emph{dependence on $x$} that is interesting in regression and prediction problems, and that standard PIT and other popular diagnostics approaches do not model.

\subsection{Recalibrating the base model across the covariate space}
Next we address the question of how to map the initial distribution $\hat{F}(\cdot \mid x)$ to the ``target'' distribution $F(\cdot\mid x)$ of the calibration sample. The key observation underlying our proposed approach is that the true predictive CDF
\(F(\cdot \mid x)\) can be written as a composition of the CDF of the known base model
\(\hat F(\cdot\mid x)\) and the (in practice, unknown) true conditional PIT-CDF \(G(\cdot\mid x)\). The following lemma formalizes this oracle representation: \\

\begin{lemma}
\label{lem:oracle-intro}
Fix \(x \in \mathcal{X}\) and assume that the initial predictive CDF
\(\hat F(\cdot \mid x)\) is strictly increasing and continuous in \(y\).
For every
\(y \in \mathcal{Y}\),
\[
F(y \mid x) = \mathbb{P}(Y \le y \mid X=x)
= G\big(\hat F(y \mid x) \mid x \big).
\]
\end{lemma}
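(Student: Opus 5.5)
The proof is a direct event-equivalence argument. Since $\hat F(\cdot\mid x)$ is strictly increasing and continuous in $y$, the map $y\mapsto \hat F(y\mid x)$ is a bijection from $\mathcal{Y}$ onto its image, and it preserves order in both directions. So for any fixed $y\in\mathcal{Y}$ and any outcome value $Y$, we have
\[
Y \le y \iff \hat F(Y\mid x) \le \hat F(y\mid x).
\]
The forward implication follows from monotonicity. The reverse implication follows from strict monotonicity: if $Y>y$, then $\hat F(Y\mid x) > \hat F(y\mid x)$.

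With this equivalence in hand, I condition on $X=x$. On this event the PIT variable is $Z=\hat F(Y\mid X)=\hat F(Y\mid x)$, so the events $\{Y\le y\}$ and $\{Z\le \hat F(y\mid x)\}$ coincide. Taking conditional probabilities and setting $\alpha:=\hat F(y\mid x)\in[0,1]$ gives
\[
F(y\mid x)=\mathbb{P}(Y\le y\mid X=x)=\mathbb{P}\big(Z\le \hat F(y\mid x)\mid X=x\big)=G\big(\hat F(y\mid x)\mid x\big),
\]
which is the definition \eqref{eq:cond-pit-cdf} evaluated at $\alpha=\hat F(y\mid x)$.

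The only delicate point is measure-theoretic. Conditioning on $X=x$ should be read through a regular conditional distribution of $(Y,Z)$ given $X$. Under that version, the substitution $Z=\hat F(Y\mid x)$ is valid on the conditioning event, so both sides refer to the same conditional law of $Y$ given $X=x$. I would state this convention explicitly, or equivalently work with the conditional law $P_{Y\mid X=x}$ and push it forward through $\hat F(\cdot\mid x)$. Continuity is not strictly needed for the equivalence, since strict monotonicity suffices. Continuity is only what guarantees that $\hat F(y\mid x)$ lies in $[0,1]$ and that the representation behaves well on the full range of $\alpha$, so I do not expect any genuine obstacle.
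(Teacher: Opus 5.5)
Your proposal is correct and follows the paper's proof: strict monotonicity of $\hat F(\cdot\mid x)$ makes the events $\{Y\le y\}$ and $\{\hat F(Y\mid x)\le \hat F(y\mid x)\}$ coincide, and conditioning on $X=x$ then gives $G(\hat F(y\mid x)\mid x)$ directly from the definition of $G$. One small correction to your closing remark: $\hat F(y\mid x)\in[0,1]$ holds for any CDF, so continuity plays no role in this lemma (it matters elsewhere, e.g.\ for $\hat F=F$ to imply $G(\alpha\mid x)=\alpha$ for every $\alpha$).
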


Lemma \ref{lem:oracle-intro} shows that recalibrating the base CDF model $\hat{F}(\cdot \mid x)$ is equivalent to estimating a family $\{G_x : x \in \mathcal{X} \}$ of {\em diagnostic transport maps} 
\begin{equation}\label{eq:diag-transport-maps}
    G_X : \alpha \mapsto G(\alpha \mid x)
\end{equation}
with $G_x(\alpha):=G(\alpha\mid x)$, which transforms the original probabilities into calibrated probabilities for every $\alpha \in [0,1]$ and $x \in \mathcal{X}$. 

By regression, we can obtain an estimate of the conditional PIT-CDF $G(\cdot\mid x)$ (Equation~\ref{eq:cond-pit-cdf}) and hence an estimate of the entire family of transport maps for $x \in \mathcal X$. This can be done via a semi-parametric approach (see Section~\ref{subsec:semi}) or---if there is ample calibration data----via a fully nonparametric probabilistic classification and e.g. monotone neural networks (see Section 3.2 in \textcite{dey2025}.) In either case, we define the recalibrated predictive distribution as follows: \\

\begin{definition} [Recalibrated predictive distribution]\label{def:recalibrated_PD}
  The recalibrated predictive CDF of $Y$ given $x$ is defined by
 \begin{equation}
\widetilde F(y|x):= \hat G_x \left(\hat F(y|x)\right), 
 \label{eq:recalibrated_PD}
 \end{equation} 
 where $\hat G_x$ 
 is an estimate of $G_x$.
\end{definition}

When both the original predictive distribution and the PIT model admit densities, the calibrated \emph{density} can also be obtained in closed form. Let $\hat{f}(y\mid x) := \partial_y\hat{F}(y\mid x)$ denote the initial conditional density of $Y\mid X = x$, and let $g(\alpha):= \partial_\alpha \hat{G}_x(\alpha)$ denote the density of $Z$ under the PIT model $\hat{G}_x$. Differentiating the composition $\tilde{F}(y\mid x) = \hat{G}_x(\hat{F}(y\mid x))$ with respect to $y$ yields, by the chain rule, 
\begin{equation}
\tilde{f}(y\mid x):= \partial_y\tilde{F}(y\mid x) = \hat{g}_x(\hat{F}(y\mid x))\hat{f}(y\mid x)  
\end{equation}
That is, the transport map correction rescales the initial density estimate by the PIT density evaluated at the base model CDF value.

\begin{figure}
    \centering
    \includegraphics[width=1.0\linewidth]{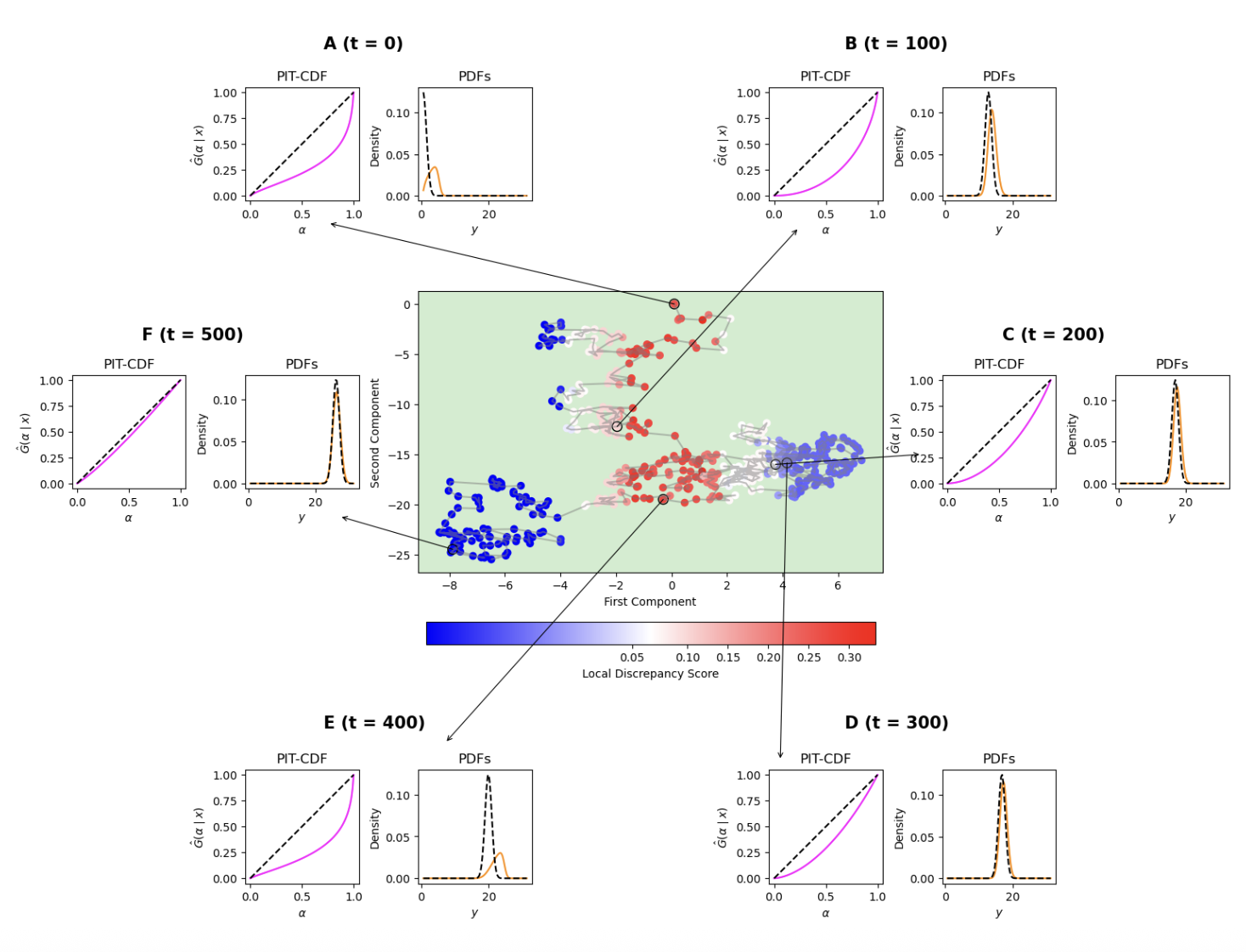}
    \caption{\small{{\bf Synthetic example.} The true (unknown) predictive distribution $F(y_t\mid x_t)$, where $x_t \in \mathbb{R}^2$, follows a Gumbel distribution with a shape that varies with $||x_t||_2$; see Section~\ref{sec:synthetic_example} for details. From calibration data, we estimate a diagnostic map that transports a Gaussian base model to a reshaped distribution that better approximates the predictive distribution for all values of $x_t$ and $y_t$. The central image in the figure depicts the test points in the input space $\mathcal{X}$ with the color of each point encoding the estimated local discrepancy score (Equation~\ref{eq:lds_equation}); this score provides the scientist with guidance on how far the original base model is from the true predictive distribution for different $x_t$-values. The estimated PIT-CDF diagnostics (here semiparametrically estimated) provide detailed information on the failure; each matching plot to the right shows the base and reshaped PDFs before and after applying the transport map. }}
    \label{fig:synthetic}
\end{figure}

Figure~\ref{fig:synthetic} illustrates the key idea and utility of diagnostic transport maps $G_X$.
These maps (i) pinpoint where in the feature space $\mathcal{X}$ the base model is a poor fit, (ii) directly convey the failure mode (bias, dispersion, skewness, tail weight, etc.) via PIT-CDF diagnostic plots (see Appendix~\ref{appendix:diagnostics} and Figure~\ref{fig:cheatsheet} for details), and (iii) offer a mechanism to recalibrate the predictive distribution. No other approach in the literature offers a general unified framework to local diagnostics and recalibration of PDs, which can be adapted to any type of data $x$ and initial predictive distribution.

Finally, forecasters are often interested in the quality of estimating the probability of a binary event $\{Y \leq t \}$ at some pre-determined threshold $t \in \mathbb{R}$; for example, TC forecasters are interested in estimating the probability that a storm's intensity is less than or equal to hurricane status (63 knots) at time $t+24$ hours. Corollary~\ref{cor:thresholded-forecasts} directly relates the error in (thresholded) probability forecasts to the error in the PIT-CDF estimates:\\

\begin{corollary}[Error in probability forecasts]
\label{cor:thresholded-forecasts}
Let $F(t \mid x) = \mathbb{P}(Y\leq t\mid x)$ with initial estimate $p=\hat{F}(t\mid x)$.  For every fixed $x \in \mathcal{X}$ and threshold $t \in \mathbb{R}$,
\[ \left|\tilde F(t \mid x) - {F}(t \mid x) \right| = \left|\hat G_x(p ) - G_x(p ) \right|\]
\end{corollary}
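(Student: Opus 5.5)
The plan is to substitute the two representations of the CDFs at the threshold $t$ and subtract. The corollary needs no new probabilistic argument. It follows from Definition~\ref{def:recalibrated_PD} combined with the oracle representation in Lemma~\ref{lem:oracle-intro}, evaluated at the single point $y=t$. Throughout, I work under the standing assumption of Lemma~\ref{lem:oracle-intro}: $\hat F(\cdot\mid x)$ is continuous and strictly increasing in $y$.

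First, I apply Definition~\ref{def:recalibrated_PD} at $y=t$. This gives $\tilde F(t\mid x)=\hat G_x(\hat F(t\mid x))=\hat G_x(p)$, with $p:=\hat F(t\mid x)$.

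Second, I invoke Lemma~\ref{lem:oracle-intro} at the same $x$ and $y=t$. This gives $F(t\mid x)=G(\hat F(t\mid x)\mid x)=G_x(p)$.

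Both CDFs are therefore evaluated at the same probability level $p$. Subtracting the two identities and taking absolute values yields
\[
\bigl|\tilde F(t\mid x)-F(t\mid x)\bigr| = \bigl|\hat G_x(p)-G_x(p)\bigr|,
\]
which is the claim.

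The only point needing care is the case where the threshold $t$ lies outside $\mathcal{Y}$ or at its boundary. There the lemma is stated only for $y\in\mathcal{Y}$, so I would add a brief remark. If $t$ lies below (above) the support, then $p=0$ ($p=1$). Since $G_x$ and $\hat G_x$ are CDFs on $[0,1]$ and the recalibrated CDF inherits the endpoint values, both sides equal $0$ in that case. Alternatively, one can simply read $t\in\mathcal{Y}$ into the statement.

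I expect no genuine obstacle. The step that uses the hypothesis is the oracle identity, and its strict monotonicity and continuity requirement has already been discharged in Lemma~\ref{lem:oracle-intro}.
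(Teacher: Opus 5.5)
Your proof is correct and follows the paper's own route: evaluate $\tilde F(t\mid x)=\hat G_x(p)$ (Definition~\ref{def:recalibrated_PD}) and $F(t\mid x)=G_x(p)$ (Lemma~\ref{lem:oracle-intro}) at the same level $p=\hat F(t\mid x)$ and subtract; the paper's closing appeal to the triangle inequality is superfluous, since the identity is exact. Your boundary aside is unneeded and slightly inaccurate, because $p$ is set by the base model, not by the support of $Y$ (for the paper's Gaussian base model $p\in(0,1)$ for every real $t$, so if $t$ lies below $\mathcal{Y}$ both sides equal $\hat G_x(p)$, which is generally nonzero, not $0$). Nothing is lost, though: the event identity $\{Y\le t\}=\{\hat F(Y\mid x)\le \hat F(t\mid x)\}$ behind the lemma holds for every real $t$ once $\hat F(\cdot\mid x)$ is strictly increasing on $\mathbb{R}$.
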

As a consequence, we can also rewrite the integrated squared error (ISE) of the reshaped CDF at fixed $x$ in terms of  the conditional PIT-CDF errors: 
 \begin{align}
    \mathcal{E}(x) &:=\int_{-\infty}^{\infty}\big(\tilde{F}(t\mid x)-F(t\mid x)\big)^2dt \nonumber  \\
    & = \int_{0}^{1} \frac{\big(\hat{G}(p\mid x)-G(p\mid x)\big)^2} {\hat{f} (u|x)} dp, \label{eq:true_ISE}
\end{align}    
where $u= \hat{F}^{-1}(p|x)$. The ISE is directly related to two standard proper scoring rules: the continuous ranked probability score (CRPS; Definition 1.1 in Supplemental Materials) commonly used in conditional density estimation, and the pinball loss function commonly used in quantile regression. See Supplemental Materials, Section 2, for details.

\subsubsection{An algorithmic solution to estimating a family of optimal transport maps}
The optimal transport (OT), which rearranges a source distribution $\hat{F}(y\mid x)$ into a target distribution $F(y\mid x)$ in outcome space for fixed $x$ is given by $T_x(y) = F^{-1}(\hat{F}(y\mid x)\mid x)$ \parencite{santambrogio2015optimal}. The diagnostic transport map corresponds to an OT map and can be seen as an algorithmic solution to estimating the entire family of OT maps across a potentially high-dimensional feature space via a \emph{single} regression.\\

\begin{proposition}[OT map as a composition involving \(G\)]
\label{prop:Tx-G}
Fix \(x \in \mathcal{X}\) and assume that
\(F(\cdot\mid x)\), \(\hat F(\cdot\mid x)\), and \(G(\cdot\mid x)\) are
continuous and strictly increasing. Then, for any \(y\in\mathcal{Y}\),
\[
T_x(y)
= F^{-1}\!\big(\hat F(y\mid x)\mid x\big)
= \hat F^{-1}\!\Big(G^{-1}\big(\hat F(y\mid x)\mid x\big)\,\Bigm|\,x\Big).
\]
\end{proposition}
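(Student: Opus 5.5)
The plan is to derive the identity directly from the oracle representation in Lemma~\ref{lem:oracle-intro}, which gives, for fixed $x$, the functional equation $F(\cdot\mid x) = G\big(\hat F(\cdot\mid x)\mid x\big)$ on $\mathcal{Y}$. Writing $F_x := F(\cdot\mid x)$, $\hat F_x := \hat F(\cdot\mid x)$ and $G_x := G(\cdot\mid x)$, this reads $F_x = G_x\circ \hat F_x$. Under the hypotheses, each of the three maps is continuous and strictly increasing, hence a bijection onto its image with a continuous, strictly increasing inverse. The inverse of a composition of bijections is the reversed composition of the inverses. So on the image of $F_x$ we should have
\[
F_x^{-1} = \hat F_x^{-1}\circ G_x^{-1}.
\]
Substituting the argument $\hat F_x(y)$ would then give the claimed formula for $T_x(y)$.

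The steps, in order, are as follows.
\begin{enumerate}
\item Invoke Lemma~\ref{lem:oracle-intro} (its hypotheses on $\hat F_x$ are among ours) to get $F_x = G_x\circ\hat F_x$.
\item Set $u := \hat F_x^{-1}\big(G_x^{-1}(\hat F_x(y))\big)$, provided it is well defined. Applying $F_x = G_x\circ\hat F_x$ gives
\[
F_x(u) = G_x\big(\hat F_x(u)\big) = G_x\big(G_x^{-1}(\hat F_x(y))\big) = \hat F_x(y).
\]
\item Since $F_x$ is strictly increasing, $u$ is the unique solution of $F_x(u)=\hat F_x(y)$. Hence $u = F_x^{-1}(\hat F_x(y)) = T_x(y)$.
\end{enumerate}
This verification route avoids manipulating inverses abstractly: we only use that $G_x\circ G_x^{-1}$ is the identity on the range of $G_x$, together with injectivity of $F_x$.

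The main obstacle is the bookkeeping of domains and ranges, that is, checking that every inverse is applied to a point in the appropriate image. We need two facts.
\begin{itemize}
\item $\hat F_x(y)\in G_x([0,1])$. Since $G_x$ is a CDF on $[0,1]$ that is continuous and strictly increasing, $G_x(0)=0$ and $G_x(1)=1$, so $G_x$ is a bijection of $[0,1]$. Thus $G_x^{-1}(\hat F_x(y))$ is always defined.
\item $G_x^{-1}(\hat F_x(y))$ lies in $\hat F_x(\mathcal{Y})$, so that $\hat F_x^{-1}$ can be applied. Here we use that $Z=\hat F(Y\mid X)$ is supported in $\hat F_x(\mathcal{Y})$ given $X=x$. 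Strict monotonicity of $G_x$ on $[0,1]$ then forces $\hat F_x(\mathcal{Y})$ to be essentially all of $(0,1)$, up to its endpoints.
\end{itemize}
If this range argument gets delicate, for instance at the endpoints $0$ and $1$, the fallback is to interpret all inverses as generalized (left-continuous) quantile functions. Under continuity and strict monotonicity these coincide with true inverses on the interior $(0,1)$, and the boundary cases can be handled separately by the conventions $F^{-1}(0)=\inf\mathcal{Y}$ and $F^{-1}(1)=\sup\mathcal{Y}$.
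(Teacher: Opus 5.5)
Your proof is correct and matches the intended argument. The paper gives no appendix proof of this proposition (it points to its supplement), but the result follows exactly as you do it: invert the oracle composition $F_x = G_x\circ \hat F_x$ from Lemma~\ref{lem:oracle-intro} to get $F_x^{-1} = \hat F_x^{-1}\circ G_x^{-1}$ on $(0,1)$. Checking that your candidate $u$ is the unique solution of $F_x(u)=\hat F_x(y)$, together with the range check that $G_x$ is a bijection of $[0,1]$ and that $\hat F_x(\mathcal{Y})$ covers $(0,1)$, makes the inversion rigorous.
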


Figure~\ref{fig:OTExplanation} illustrates the parallel views of diagnostic transport maps and optimal transport (see  Sections 3-4 Supplemental Materials for further details).
\begin{figure}
    \centering
    \includegraphics[width=0.65\linewidth]{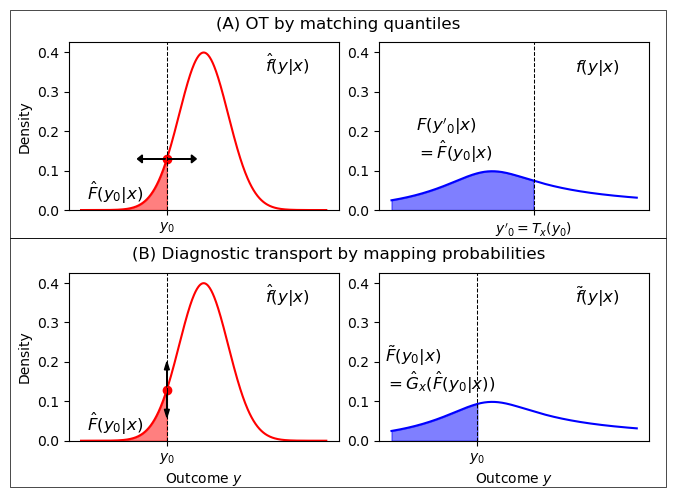}
    \caption{\small{{\bf Diagnostic maps as the solution to an optimal transport problem.}\emph{Top (A):} A standard optimal transport (OT) map from $\hat{F}(y\mid x)$ to $F(y\mid x)$ uses quantile matching for fixed $x$ to match calibration data, where the OT map is given by $T_x(y_0) := F^{-1}(\hat{F}(y_0\mid x)\mid x).$ The OT map rearranges one PDF into another in outcome space, as indicated by the left-right double arrows. In practice, the true target distribution $F$ is not known, and an OT map is also not designed to provide diagnostics of a base model. \emph{Bottom (B):} A diagnostic transport constructs an estimate of the entire conditional distribution $F(y\mid x)$ by mapping probabilities, yielding the recalibrated distribution $\tilde{F}(y_0\mid x) = \hat G_{x}(\hat F(y\mid x))$ (Definition~\ref{def:recalibrated_PD}). We achieve an estimate of the entire family of OT maps via $\hat{T}(y_0\mid x) := \tilde{F}^{-1}(\hat{F}(y_0\mid x)\mid x)$ for all $x\in \mathcal{X}$ and $ y_0 \in \mathcal{Y}$ (Proposition~\ref{prop:Tx-G}) via a single regression. That is, the diagnostic map reshapes the original PDF in probability space, as indicated by the top-bottom double arrows, but the end result of matching quantiles or mapping probabilities is the same at every $x$.}} 
    \label{fig:OTExplanation}
\end{figure}

\subsection{Robustness via semiparametric diagnostic transport maps}\label{subsec:semi}
The original calibration algorithm by \textcite{dey2025} estimates $G(\cdot \mid x)$ nonparametrically using a monotonic neural network that regresses the binary event $\mathbbm{1}\{\hat F(Y \mid X) \leq \alpha \}$ on $(\alpha, X)$. While a fully nonparametric approach is flexible and asymptotically consistent, it can produce noisy or unstable predictive distributions when the calibration sample size $N$ is moderate or small; this is especially a problem for the tails of the predictive distribution that are not well-represented in calibration data. 
To better handle tail events, we propose a \emph{semiparametric} version of diagnostic transport maps that posits a parametric model on the conditional PIT-CDF at every fixed $x$, and then regresses the PIT-CDF nonparametrically with respect to $x$ via a varying-coefficient-type model.

\paragraph*{Local PIT parametrization.} In many scientific applications, calibration data are often limited in size, making it difficult to solve for $\hat{G}(\cdot \mid x)$ non-parametrically. Therefore, instead of learning an arbitrary function $(\alpha, x)\mapsto G(\alpha\mid x)$, we posit that for each $x\in \mathcal{X}$ the conditional distribution of $Z\mid X=x$ belongs to a parametric family on $[0,1]$, indexed by a \emph{local parameter} $\theta(x)\in \Theta\subset \mathbb{R}^p$:
\[G(\alpha\mid x)\approx G_{\theta(x)}(\alpha)\]
Here, $G_\theta$ denotes a CDF on $[0,1]$ for each $\theta \in \Theta$ that includes the Uniform(0,1) distribution as a special case (for example, $G_\theta$ could be the Beta distribution.) In this work, we use the two-parameter Kumaraswamy distribution~\parencite{kumaraswamy1980}, which is a Beta-type distribution with closed-form expressions for its CDF, PDF, and quantile function.

Given the calibration sample \(\mathcal{T}_{\rm cal} = \{(X_i,Y_i)\}_{i=1}^N\),
we form the PIT values
\[
Z_i := \hat F(Y_i \mid X_i), \qquad i=1,\dots,N.
\]

\paragraph*{PIT regression.}
Next we fit the function  $x\mapsto \theta(x)$ non-parametrically by (for example) maximizing the local conditional likelihood of \(Z_i \mid X_i=x \sim G_{\theta(x)}\). 
The result is an estimated parameter function
\(\widehat\theta(x)\) and an estimated PIT-CDF
\[
\widehat G(\alpha \mid x) := G_{\widehat\theta(x)}(\alpha).
\] 

The recalibrated predictive CDF for \(Y\mid X=x\) is obtained by
the composition 
\begin{equation}
\widetilde F(y \mid x)
:= \widehat G\bigl(\hat F(y\mid x) \mid x\bigr)
= G_{\widehat\theta(x)}\bigl(\hat F(y\mid x)\bigr).
\end{equation}

Intuitively, it is clear why a semiparametric approach is more robust than a fully nonparametric approach when estimating tail events: Whereas a nonparametric method only uses information from the tails of the conditional distribution of $Y \mid X$ (hence large sample sizes are needed), a semiparametric method leverages all values of $Y \mid X$  (at the cost of bias due to the parametric shape). We will make this rigorous in the next section.

\section{Theory of semiparametric diagnostic transport maps}\label{sec:theory}
As before, let $\hat F(\cdot\mid x)$ be the
base predictive CDF, $Z=\hat F(Y\mid X)$ be the PIT variable, and
$G(\alpha\mid x)=\mathbb P(Z\le\alpha\mid X=x)$  the conditional PIT-CDF.
Throughout this section,  $\hat F$ is treated as a
fixed function.
We further assume that the map $y\mapsto\hat F(y\mid x)$ is strictly
increasing and continuous for every $x$.   
The fully nonparametric LADaR estimator estimates the   bivariate
surface $(\alpha,x)\mapsto G(\alpha\mid x)$.
The semiparametric variant restricts the $\alpha$-direction to a
low-dimensional family of CDFs on $[0,1]$,
$
\mathcal G=\{G_\theta:\theta\in\Theta\subset\mathbb R^p\},
$
and approximates the conditional PIT-CDF as
$
G(\alpha\mid x)\approx G_{\theta(x)}(\alpha)$, and thus $\tilde F(y\mid x)=G_{\hat\theta(x)}\{\hat F(y\mid x)\}.
$ 

To fit the PIT family, let   $L(z,\theta)$ be a loss function.  For each
$x\in\mathcal X$, define the oracle PIT parameter by the pointwise
conditional risk minimizer  $$
\theta^\star(x)
= 
\arg\min_{\theta\in\Theta}
\mathbb E\{L(Z,\theta)\mid X=x\}.
$$
The corresponding oracle semiparametric PIT map is
$
G^\star(\alpha\mid x)
:=
G_{\theta^\star(x)}(\alpha).
$
If the true conditional PIT
distribution belongs to the family $\mathcal G$ at $x$, then strict
propriety of the loss implies
$
G_{\theta^\star(x)}(\cdot)=G(\cdot\mid x).
$

\subsection{Error decomposition}

Let $\widehat \theta(x)$ be an estimate of $\theta(x)$.
The total error of the semiparametric recalibrated CDF decomposes into
two interpretable components: 

\begin{proposition}[Error decomposition]
\label{prop:stats-decomp}
For every $x\in\mathcal X$ and $y\in\mathbb R$,
$$
\begin{aligned}
|\tilde F(y\mid x)-F(y\mid x)|
&\le
\underbrace{|G_{\theta^\star(x)}\{\hat F(y\mid x)\}-G\{\hat F(y\mid x)\mid x\}|}_{\text{semiparametric approximation error}}\\
&\quad+
\underbrace{|G_{\hat\theta(x)}\{\hat F(y\mid x)\}-G_{\theta^\star(x)}\{\hat F(y\mid x)\}|}_{\text{estimation error}}.
\end{aligned}
$$
In particular,
$
\sup_y|\tilde F(y\mid x)-F(y\mid x)|\le A_\infty(x)+E_\infty(x),
$
with $A_\infty(x)=\sup_{\alpha\in[0,1]}|G_{\theta^\star(x)}(\alpha)-G(\alpha\mid x)|$ and $E_\infty(x)=\sup_{\alpha\in[0,1]}|G_{\hat\theta(x)}(\alpha)-G_{\theta^\star(x)}(\alpha)|$.
\end{proposition}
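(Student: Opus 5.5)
The plan is to combine the oracle representation of Lemma~\ref{lem:oracle-intro} with a single add-and-subtract step and the triangle inequality. Everything is pointwise in $(x,y)$, and no probabilistic argument is needed, because $\hat F$ is treated as fixed.

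Fix $x\in\mathcal X$ and $y\in\mathbb R$, and set $p:=\hat F(y\mid x)\in[0,1]$. By the standing assumption of this section, $\hat F(\cdot\mid x)$ is continuous and strictly increasing. Lemma~\ref{lem:oracle-intro} therefore gives $F(y\mid x)=G(p\mid x)$. By definition of the semiparametric recalibrated CDF, $\tilde F(y\mid x)=G_{\hat\theta(x)}(p)$. Hence
\[
\tilde F(y\mid x)-F(y\mid x)=\bigl\{G_{\theta^\star(x)}(p)-G(p\mid x)\bigr\}+\bigl\{G_{\hat\theta(x)}(p)-G_{\theta^\star(x)}(p)\bigr\},
\]
obtained by adding and subtracting the oracle map value $G_{\theta^\star(x)}(p)$. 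Taking absolute values and applying the triangle inequality yields the displayed pointwise bound, with the two terms labeled approximation and estimation error. This is essentially the content of Corollary~\ref{cor:thresholded-forecasts}, with an intermediate oracle term inserted.

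For the uniform statement, note that as $y$ ranges over $\mathbb R$, the value $p=\hat F(y\mid x)$ stays in $[0,1]$. Each of the two terms is therefore bounded by its supremum over $\alpha\in[0,1]$, namely $A_\infty(x)$ and $E_\infty(x)$. Taking $\sup_y$ on the left gives $\sup_y|\tilde F-F|\le A_\infty(x)+E_infty(x)$.

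The only potential obstacle is ensuring that the oracle identity $F=G\circ\hat F$ holds for every $y$, not merely almost everywhere. This relies on the strict monotonicity and continuity of $\hat F(\cdot\mid x)$, under which $\{\hat F(Y\mid x)\le \hat F(y\mid x)\}=\{Y\le y\}$ exactly. That hypothesis is assumed throughout the section, so Lemma~\ref{lem:oracle-intro} applies directly. It is also worth noting that $\theta^\star(x)$ is assumed to exist, though not necessarily uniquely; any minimizer works, since the argument never uses its optimality.
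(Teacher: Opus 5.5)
Your proof is correct and follows the paper's argument essentially line for line: set $\alpha=\hat F(y\mid x)\in[0,1]$, use Lemma~\ref{lem:oracle-intro} and the definition of the recalibrated CDF to write $F(y\mid x)=G(\alpha\mid x)$ and $\tilde F(y\mid x)=G_{\hat\theta(x)}(\alpha)$, insert the oracle term $G_{\theta^\star(x)}(\alpha)$, apply the triangle inequality, and take the supremum over $y$. Your side remarks are also accurate: the optimality of $\theta^\star(x)$ is never used, and the strict monotonicity of $\hat F(\cdot\mid x)$ is what makes the oracle identity hold for every $y$.
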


The first term (``semiparametric approximation error'') captures the irreducible bias imposed by the choice of $G$. The chosen PIT family may not contain the true conditional PIT-CDF $G(\cdot \mid x)$, which leaves a non-vanishing bias term even with infinite calibration data.

\subsection{Error rates}

In this section, we show how a  
pointwise rate for $\hat\theta(x)$  propagates to
the recalibrated CDF.  We first impose a regularity condition on the chosen
PIT family:

\begin{assumption}[Lipschitz PIT family]
\label{ass:lipschitz-stats}
There exists $K<\infty$ such that
$$
\sup_{\alpha\in[0,1]}|G_{\theta_1}(\alpha)-G_{\theta_2}(\alpha)|\le K\|\theta_1-\theta_2\|
\qquad\text{for all }\theta_1,\theta_2\in\Theta.
$$
\end{assumption}

This condition holds on compact $\Theta$  for the Beta and the Kumaraswamy families.
We also assume a standard nonparametric rate $r_N$ for the estimator $\hat\theta$:

\begin{assumption}[Varying-coefficient rate]
\label{ass:theta-rate-stats}
For a deterministic sequence $r_N(x)\to0$,
$
\|\hat\theta(x)-\theta^\star(x)\|=O_P\{r_N(x)\}.
$
\end{assumption}

When the population target $\theta^\star(\cdot)$ is $\beta$-smooth in
$d$ covariates, the rate
$r_N(x)=N^{-\beta/(2\beta+d)}$
is typically attained by  standard nonparametric estimators
tuned appropriately  \parencite{newey1994,shen1994,chen2007}.
These assumptions lead to the following rate:

\begin{proposition}
\label{prop:stats-rate}
Under Assumptions~\ref{ass:lipschitz-stats}--\ref{ass:theta-rate-stats},
$
\sup_y|\tilde F(y\mid x)-F(y\mid x)|\le A_\infty(x)+O_P\{r_N(x)\}.
$
Thus, if the semiparametric PIT family is correctly specified at $x$, in
the sense that
$
G_{\theta^\star(x)}(\cdot)=G(\cdot\mid x),
$ 
then $A_\infty(x)=0$, and $\tilde F(\cdot\mid x)$ inherits the rate
$r_N(x)$ of $\hat\theta(x)$.
\end{proposition}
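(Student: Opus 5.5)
The plan is to combine the error decomposition of Proposition~\ref{prop:stats-decomp} with the two assumptions, so the argument is essentially a chain of inequalities. Fix $x\in\mathcal X$. Proposition~\ref{prop:stats-decomp} gives, deterministically,
\[
\sup_y|\tilde F(y\mid x)-F(y\mid x)|\le A_\infty(x)+E_\infty(x),
\]
with $A_\infty(x)$ a non-random quantity. It depends only on $\theta^\star(x)$ and $G(\cdot\mid x)$, not on the calibration sample. So everything reduces to controlling the random term $E_\infty(x)$.

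For the estimation term, I will apply Assumption~\ref{ass:lipschitz-stats} with $\theta_1=\hat\theta(x)$ and $\theta_2=\theta^\star(x)$. Both lie in $\Theta$: $\hat\theta(x)$ by construction of the estimator, and $\theta^\star(x)$ as the argmin over $\Theta$. This yields
\[
E_\infty(x)=\sup_{\alpha\in[0,1]}|G_{\hat\theta(x)}(\alpha)-G_{\theta^\star(x)}(\alpha)|\le K\|\hat\theta(x)-\theta^\star(x)\|.
\]
By Assumption~\ref{ass:theta-rate-stats}, the right-hand side is $K\cdot O_P\{r_N(x)\}=O_P\{r_N(x)\}$, since $K$ is a fixed finite constant.

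Concretely, for every $\varepsilon>0$ there is $M_\varepsilon$ with $\mathbb P(\|\hat\theta(x)-\theta^\star(x)\|>M_\varepsilon r_N(x))<\varepsilon$ for all large $N$. On the complement of that event,
\[
\sup_y|\tilde F-F|-A_\infty(x)\le K M_\varepsilon r_N(x).
\]
This is exactly the claimed statement $\sup_y|\tilde F(y\mid x)-F(y\mid x)|\le A_\infty(x)+O_P\{r_N(x)\}$.

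For the second claim, suppose $G_{\theta^\star(x)}(\cdot)=G(\cdot\mid x)$. Then the supremum defining $A_\infty(x)$ is over a function that is identically zero, so $A_\infty(x)=0$. The bound collapses to $O_P\{r_N(x)\}$, meaning $\tilde F(\cdot\mid x)$ converges uniformly in $y$ at rate $r_N(x)$.

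There is no genuine obstacle here. The only points needing care are measurability of the random supremum $E_\infty(x)$, and checking that the deterministic Lipschitz bound converts an $O_P$ statement about $\hat\theta$ into one about the CDF. Measurability holds because $G_\theta(\alpha)$ is continuous in $\alpha$ for the families considered, so the supremum can be taken over rational $\alpha$. The conversion is immediate, since the Lipschitz inequality holds pointwise on every sample.
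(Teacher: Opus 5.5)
Your proof is correct and follows essentially the same route as the paper: apply the decomposition of Proposition~\ref{prop:stats-decomp}, bound $E_\infty(x)\le K\|\hat\theta(x)-\theta^\star(x)\|$ via Assumption~\ref{ass:lipschitz-stats}, and conclude with Assumption~\ref{ass:theta-rate-stats}. Your explicit unpacking of the $O_P$ statement and your remark that $A_\infty(x)$ is non-random are useful clarifications but do not change the argument.
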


Thus, after the semiparametric approximation error $A_\infty(x)$ is
separated out, the stochastic part of the recalibration error is governed by
the smoothness of the low-dimensional coefficient function
$\theta^\star(\cdot)$.  This differs from direct nonparametric estimation
of the full  distribution $F(y\mid x)$, or of the full PIT surface
$G(\alpha\mid x)$, both of which involve an additional response or
probability coordinate unless further structure is imposed. If the covariates have lower-dimensional structure and if the estimator is properly chosen, then $d$ denotes the intrinsic rather than ambient dimension \parencite{bickel2007local,kpotufe2011knn,kpotufe2013adaptivity,lee2016spectral}. 

\subsection{Comparison with nonparametric recalibration for tail events}
\label{sec:tail-comparison}

We compare the semiparametric recalibration estimator with a fully
nonparametric local estimator in the lower tail.  Fix $x\in\mathcal X$ and
a tail probability $\alpha_N$.  Define
$
p_N(x)=G(\alpha_N\mid x)
      =\mathbb P(Z\le \alpha_N\mid X=x),
$
and assume $p_N(x)>0$ for all sufficiently large $N$.

\textbf{Nonparametric estimators.} 
A standard nonparametric estimator of $p_N(x)$ is the local kernel
frequency estimator
$$
\hat p_{{\rm np},h_N}(x)
=
\frac{
\sum_{i=1}^N K_{h_N}(x-X_i)\mathbb{I}\{Z_i\le \alpha_N\}
}{
\sum_{i=1}^N K_{h_N}(x-X_i)
},
$$
where $K_{h_N}(u)=h_N^{-d}K(u/h_N)$ and $h_N\to0$.  Under standard
random-design and kernel regularity conditions, and ignoring smoothing
bias, \textcite{daouia2013}, Proposition 1 gives
$$
\sqrt{N h_N^d p_N(x)}
\left\{
\frac{\hat p_{{\rm np},h_N}(x)}{p_N(x)}-1
\right\}
=
O_P(1),
$$
or equivalently
$$
\frac{\hat p_{{\rm np},h_N}(x)-p_N(x)}{p_N(x)}
=
O_P\!\left\{(N h_N^d p_N(x))^{-1/2}\right\}.
$$
Thus, apart from smoothing bias, relative consistency requires
the sequence $p_N$ to decrease not too fast:
$
N h_N^d p_N(x)\to\infty,
$
that is,
$
p_N(x)\gg (N h_N^d)^{-1}.
$
In particular, the usual bandwidth
$ h_N\asymp N^{-1/(2\beta+d)}$ 
requires
\begin{align}
\label{eq:nonpar}
p_N(x)\gg N^{-2\beta/(2\beta+d)}.
\end{align}
Relative consistency for the nonparametric method therefore  
does not allow $p_N(x)$ to go to zero too fast; it
needs many observations that are both
local in $\mathcal X$ and in the tail event $\{ Z\le\alpha_N \}$.

\textbf{Semiparametric estimators.} 
The semiparametric estimator of the same tail probability is
$
\hat p_{\rm sp}(x)=G_{\hat\theta(x)}(\alpha_N).
$ 
To analyse such estimator, we impose the following structural stability condition on the fitted PIT
family (not on the true data-generating process):

\begin{assumption}[Relative tail stability]
\label{ass:relative-tail-stability}
There exist constants $C<\infty$, $\rho>0$, and $\alpha_0>0$, and a
nonnegative function $S(\alpha,x)$, such that for every
$\alpha\in(0,\alpha_0)$ and every $\theta$ satisfying
$$
\|\theta-\theta^\star(x)\|\le \frac{\rho}{S(\alpha,x)},
$$
it holds that
\begin{equation}
\label{eq:relative-tail-stability}
\frac{
\left|G_{\theta}(\alpha)-G_{\theta^\star(x)}(\alpha)\right|
}{
G_{\theta^\star(x)}(\alpha)
}
\le
C\,S(\alpha,x)\|\theta-\theta^\star(x)\|.
\end{equation}
\end{assumption}

The function $S(\alpha,x)$ measures how strongly the parameter error is
amplified when the fitted PIT map is evaluated in the lower tail. 

\begin{proposition}[Semiparametric relative tail error]
\label{prop:sp-tail-rate}
Fix $x\in\mathcal X$ and let $\alpha_N\downarrow0$.  Define
$
A_N(x)
=
\left|
G_{\theta^\star(x)}(\alpha_N)-G(\alpha_N\mid x)
\right|.
$ 
Suppose Assumptions~\ref{ass:theta-rate-stats} and
\ref{ass:relative-tail-stability} hold.  If
$
S(\alpha_N,x)r_N(x)\to0,
$
then the relative error is
$$
\frac{|\hat p_{\rm sp}(x)-p_N(x)|}{p_N(x)}
\le
\frac{A_N(x)}{p_N(x)}
+
O_P\!\left[
S(\alpha_N,x)r_N(x)
\left\{
1+\frac{A_N(x)}{p_N(x)}
\right\}
\right].
$$
Consequently, if $A_N(x)=O\{p_N(x)\}$, then
$$
\frac{|\hat p_{\rm sp}(x)-p_N(x)|}{p_N(x)}
\le
\frac{A_N(x)}{p_N(x)}
+
O_P\{S(\alpha_N,x)r_N(x)\}.
$$
In particular, relative consistency holds whenever
$
A_N(x)=o\{p_N(x)\}$  and $S(\alpha_N,x)r_N(x)\to0.
$
\end{proposition}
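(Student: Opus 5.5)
The argument is a triangle inequality followed by a relative-error bound on the estimation term. Write $p^\star_N(x)=G_{\theta^\star(x)}(\alpha_N)$. Then $\hat p_{\rm sp}(x)-p_N(x)=\{G_{\hat\theta(x)}(\alpha_N)-p^\star_N(x)\}+\{p^\star_N(x)-p_N(x)\}$. Dividing by $p_N(x)$ bounds the second piece by exactly $A_N(x)/p_N(x)$. For the first piece, factor out $p^\star_N(x)$ and write it as
\[
\frac{|G_{\hat\theta(x)}(\alpha_N)-p^\star_N(x)|}{p^\star_N(x)}\cdot\frac{p^\star_N(x)}{p_N(x)}.
\]
The triangle inequality gives $p^\star_N(x)\le p_N(x)+A_N(x)$, so the second factor is at most $1+A_N(x)/p_N(x)$. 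This produces the bracketed multiplier in the statement.

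It remains to show that the relative estimation factor $|G_{\hat\theta(x)}(\alpha_N)-p^\star_N(x)|/p^\star_N(x)$ is $O_P\{S(\alpha_N,x)r_N(x)\}$. I would invoke Assumption~\ref{ass:relative-tail-stability} on the event
\[
\Omega_N=\Big\{\|\hat\theta(x)-\theta^\star(x)\|\le \rho/S(\alpha_N,x)\Big\}.
\]
First, since $\alpha_N\downarrow0$, we have $\alpha_N<\alpha_0$ for all large $N$. Next, Assumption~\ref{ass:theta-rate-stats} gives $S(\alpha_N,x)\|\hat\theta(x)-\theta^\star(x)\|=O_P\{S(\alpha_N,x)r_N(x)\}=o_P(1)$, using the hypothesis $S(\alpha_N,x)r_N(x)\to0$. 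Hence $\mathbb P(\Omega_N)\to1$. On $\Omega_N$, inequality~\eqref{eq:relative-tail-stability} bounds the relative factor by $C\,S(\alpha_N,x)\|\hat\theta(x)-\theta^\star(x)\|$, which is $O_P\{S(\alpha_N,x)r_N(x)\}$. Because an event of probability tending to one does not affect $O_P$ statements, the bound holds unconditionally in the $O_P$ sense.

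Combining the pieces gives the first display. For the consequences: if $A_N=O(p_N)$, then $1+A_N/p_N=O(1)$ and the multiplier is absorbed into the $O_P$ term. If moreover $A_N=o(p_N)$ and $S(\alpha_N,x)r_N(x)\to0$, both terms tend to zero in probability, which is relative consistency.

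The step that needs care is the localization. The stability assumption only holds within a shrinking neighborhood whose radius $\rho/S(\alpha_N,x)$ depends on $N$, so one has to check that the hypothesis $S r_N\to0$ is exactly what puts $\hat\theta(x)$ inside it with probability tending to one. A minor additional point is that the denominator $p^\star_N(x)$ must be positive. I would take this as implicit in Assumption~\ref{ass:relative-tail-stability}, since its ratio is otherwise undefined, or else handle it via $p_N>0$ and the fact that $A_N=O(p_N)$ keeps $p^\star_N$ comparable to $p_N$.
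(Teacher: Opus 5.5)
Your proposal is correct and follows essentially the same route as the paper: the triangle inequality; localizing $\hat\theta(x)$ within radius $\rho/S(\alpha_N,x)$ with probability tending to one via $S(\alpha_N,x)r_N(x)\to0$; applying Assumption~\ref{ass:relative-tail-stability}; and bounding $G_{\theta^\star(x)}(\alpha_N)\le p_N(x)+A_N(x)$. Your explicit check that $\alpha_N<\alpha_0$ eventually is a detail the paper leaves implicit. Your positivity worry about $p^\star_N(x)$ disappears if you use the assumption in the multiplied-out form $|G_{\hat\theta(x)}(\alpha_N)-G_{\theta^\star(x)}(\alpha_N)|\le C\,S(\alpha_N,x)\,G_{\theta^\star(x)}(\alpha_N)\,\|\hat\theta(x)-\theta^\star(x)\|$, as the paper does.
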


For the Beta and Kumaraswamy families,
Assumption~\ref{ass:relative-tail-stability} holds with
$
S(\alpha,x)=1+|\log\alpha|
$
under compact shape-parameter restrictions, and thus the semiparametric stochastic
condition is
$
\{1+|\log\alpha_N|\}r_N(x)\to0.
$
Under the standard rate $r_N(x)=N^{-\beta/(2\beta+d)}$, this becomes
$
|\log\alpha_N|
=
o\!\left(N^{\beta/(2\beta+d)}\right).
$
Thus the semiparametric estimator can target nominal PIT levels as small as
$$
\alpha_N
=
\exp\!\left\{
-o\!\left(N^{\beta/(2\beta+d)}\right)
\right\},
$$
provided the oracle semiparametric tail approximation is relatively
negligible, i.e., $A_N(x)=o\{p_N(x)\}$.

Now, assume further that the conditional PIT-CDF has a regular lower tail, i.e.,
$G(\alpha\mid x)=c(x)\alpha^{\gamma(x)}L_x(\alpha)$ for small $\alpha$ \parencite{chen2007,paulauskas2017}, 
where $c(x)>0$, $\gamma(x)>0$, and $L_x$ is slowly varying.  Then
$|\log p_N(x)|$ and $|\log \alpha_N|$ are of the same order (see Appendix~\ref{app:proofs} for formal definitions and proofs.) Hence, for the
Beta and Kumaraswamy families and the standard coefficient-function rate
$r_N(x)=N^{-\beta/(2\beta+d)}$, relative consistency requires 
\[
|\log p_N(x)|
=
o\!\left(N^{\beta/(2\beta+d)}\right).
\]

By contrast, the fully nonparametric local-frequency estimator  
requires $
p_N(x)\gg N^{-2\beta/(2\beta+d)}
$ (Eq. \ref{eq:nonpar}).
Thus, under this benchmark comparison, nonparametric recalibration requires
the target tail probability to exceed a polynomial threshold, whereas the
Kumaraswamy semiparametric estimator can target probabilities
that go to zero at a much faster rate,
\[
p_N(x)
=
\exp\!\left\{
-o\!\left(N^{\beta/(2\beta+d)}\right)
\right\},
\]
as long as the relative oracle tail
approximation $A_N(x)=o\{p_N(x)\}$.
 In words: the nonparametric estimator is assumption-light about the shape of
the PIT tail, but it relies on local tail counts.  Hence relative consistency
requires enough calibration points near $x$ falling in $Z\le\alpha_N$, namely
$N h_N^d p_N(x)\to\infty$, so $p_N(x)$ cannot vanish too quickly.  The
semiparametric estimator  alleviates the curse of dimensionality  by replacing this counting with model structure:
it can handle much smaller tail probabilities, but only when the chosen PIT
family gives a sufficiently accurate relative tail approximation.

\section{Synthetic Example}\label{sec:synthetic_example}

\paragraph*{Set-up.} To illustrate our method, we consider a time-homogeneous conditional distribution $Y_t\mid X_t$ with time series data $\{ (X_t, Y_t) \}_{t \geq 0}$, where $X_t \in \mathbb{R}^2$, $Y_t \in \mathbb{R}$ and $t$ is a non-negative integer. The predictor values are given by a Gaussian random walk: $X_{t+1}=X_{t}+\phi $, where $\phi \sim N(0,\sigma^2)$ represents Gaussian white noise with a fixed variance $\sigma^2$. The true (but unknown) predictive CDF $F(y_t \mid x_t)$ is given by a Gumbel distribution~\parencite{gumbel941} with parameters $\mu = ||x_t||_2$ and $\beta = 1$. Our base model $\hat F(y_t \mid x_t)$ is a normal distribution with a fixed unit variance and the same mean as the Gumbel distribution; that is, $\hat{F}(y_t\mid x_t)=N(\mu_G(x_t), 1)$, where $\mu_G(x_t) = ||x_t||_2 + \gamma$ and $\gamma$ is the Euler's constant.

\paragraph*{Diagnostic transport maps.} 
We model the conditional PIT-CDF $G(\alpha \mid X_t=x)$ at a fixed $x$-value with a two-parameter Kumaraswamy distribution~\parencite{kumaraswamy1980}
\begin{equation}\label{eq:kumaraswamy}
G_{a,b}(\alpha) = 1 - (1 - \alpha^a)^b
\end{equation}
where $a > 0, b >0$. Let $\mathcal{T}_\text{cal}=\{ (X_t, Y_t) \}_{t=0}^{N-1}$ denote the calibration data. For a {\em semiparametric} diagnostic transport map, we first estimate $\theta_t = (a_t, b_t)$ for $t=0, \ldots, N-1$ by minimizing the continuous ranked probability score (CRPS; Section 1, Supplemental Materials) locally for each point's $k=15$  nearest neighbors in $x$. Then we regress the parameter estimates $\hat \theta_t$ on $X_t$ via gradient boosting to learn the map $x \mapsto \theta(x)$. The regression gives us the estimated PIT-CDF, $\hat G(\cdot \mid x) = G_{\hat \theta(x)}(\cdot)$.  Finally, to compare with a fully {\em nonparametric} version of diagnostic transport maps, we also fit $G(\cdot \mid x)$ with the algorithm developed in \textcite{dey2025} using a gradient boosting classifier that is monotonic with respect to $\alpha$.

\paragraph*{Results.} Figure~\ref{fig:synthetic} shows examples of local PIT-CDFs learned via semiparametric diagnostic transport maps, together with the corresponding predictive PDFs before and after a recalibration, for test data. The color of the points here code\add{s} for the ``local discrepancy score'' (LDS; Appendix~\ref{appendix:diagnostics})
\begin{equation}\label{eq:lds_equation}
    \text{LDS}(x) := \frac{1}{|\Gamma|}\sum_{\alpha\in \Gamma}(\hat{G}(\alpha\mid x) - \alpha)^2 ,
\end{equation}
where $\Gamma$ is a set of evenly spaced points in the interval $(0,1)$.
This score measures the discrepancy between the estimated PIT-CDF and the uniform distribution at a fixed value of $x \in \mathcal{X}$. Once we have learned $\hat G$, the evaluation of the diagnostic transport map at different $x$-values is instantaneous. The data analyst can easily produce similar visualizations to assess the base model and reshape predictive PDFs to match calibration data. 

\paragraph*{Comparison between semiparametric and non-parametric maps.} Because this is a synthetic example, we can derive the true conditional PIT-CDF, $G(\alpha \mid {x})=F\left(\Phi^{-1}(\alpha) \mid  {x}\right)$ (see Appendix~\ref{app:proofs} for derivation) 
and then compute the integrated squared error (ISE) of the estimated semiparametric and nonparametric transport maps according to Equation~\ref{eq:true_ISE}. 
We are interested in the convergence rate of the maps, especially in the tails of the distribution of $Y|X$. Using Monte Carlo simulation, we create $B = 25$ $\mathcal{T}_\text{cal}$ data sets at calibration sample sizes $N=25, 50, 75, 100$, and evaluate our results on a fixed grid of $x_t$ values.   Figure~\ref{fig:ConvergenceOfISE} shows that the semiparametric approach results in a lower ISE and faster convergence rates than the nonparametric approach, with the greatest improvements observed for smaller calibration sizes $N$ and in the upper tail region. 

\begin{figure}
    \centering
    \includegraphics[width=0.7\linewidth]{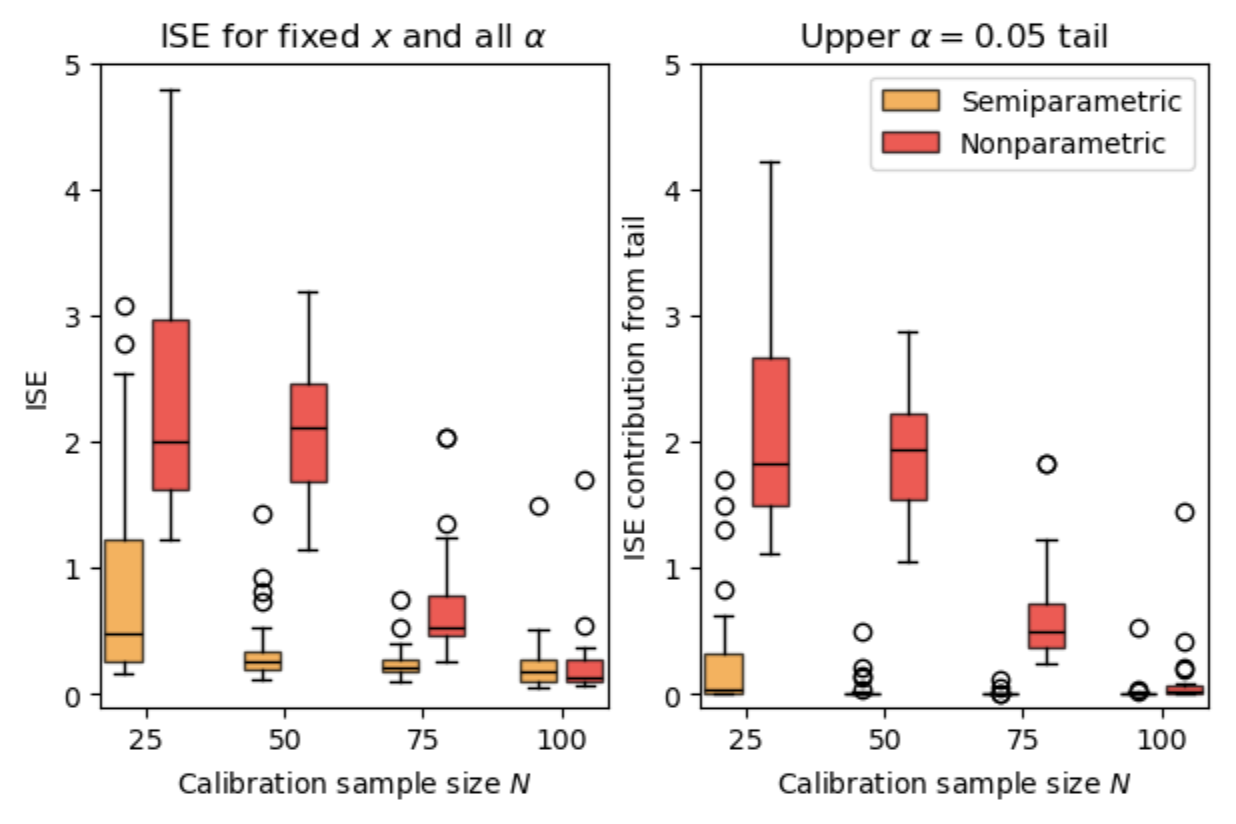}
    \caption{\small{{\bf Performance of semiparametric versus nonparametric diagnostic transport maps.}  {\em Left:} The integrated squared error, ISE, over all $\alpha$ as a function of the calibration sample size $N$. The box plots show the results for the semiparametric versus the nonparametric approach for $B=25$ Monte Carlo simulations each,  evaluated at a fixed $x$ (Example A in Figure~\ref{fig:synthetic}).{\em Right:} Contribution to the ISE from the upper $\alpha = 0.05$ tail.  Although semiparametric transport maps have non-vanishing model bias, their ISE decays much faster than nonparametric transport maps, effectively leading to more accurate estimates of the predictive distributions in the challenging small-$N$ regime. The differences are most pronounced in the tail of the distribution of $Y \mid X$.}}
    \label{fig:ConvergenceOfISE}
\end{figure}

\section{Application to short-term TC intensity forecasting}\label{sec:TCApplication}

\subsection{Data and initial distribution of forecast intensity errors}
Human forecasters at the National Hurricane Center (NHC) issue intensity and track forecasts of tropical cyclone (TC) intensities using multiple statistical, AI, and physics-based models. The NHC's Official Forecast (OFCL) typically has higher skill than any single model because human experts weight various models based on environmental conditions. Nevertheless, the task of digesting multiple inputs in real-time is challenging for human forecasters. In addition, both physics and AI models can struggle with predicting extreme and rare events, including ``rapid intensification'' (RI) and ``rapid weakening'' events, which are {operationally defined as phenomena where a storm's maximum sustained winds increase or decrease by at least 30 knots (35 mph) within a 24-hour period. These rapidly changing storms pose massive threats to coastal communities and challenge up-to-date tracking and warnings during active hurricane seasons. An improvement in real-time 24-hour guidance with diagnostics that are easily interpretable to human forecasters would have a direct positive impact on emergency response and disaster preparedness.

The NHC keeps a historical database of their official forecasts and forecast errors over the years~\textcite{nhcDatabase}. The NHC ``forecast intensity error'' is defined as the difference between the official point forecast and the best track intensity at the forecast verifying time. Statistical uncertainty is typically taken as an average of these errors over several years of record. Following~\parencite{trabing2020}, we represent NHC intensity errors by a Gaussian distribution $N(\epsilon_{t+24}, \sigma^2)$, where the mean $\epsilon_{t+24}$ is the NHC forecast intensity error at t+24 hours, and the standard deviation $\sigma$ is a ten-year average of NHC forecast intensity errors. We refer to this distribution as the NHC base error model.

Our goal is to assess and improve upon the base model using  Statistical Hurricane Intensity Prediction Scheme (SHIPS; \parencite{demaria1994}) values. SHIPS is a statistical-dynamical model used by the NHC to forecast 
TC intensity changes at 12 to 72 hours with predictors that represent environmental values, including humidity, wind shear, and distance to land. Our transport maps use a subset of 13 SHIPS predictors and the intensities at times $t-12$, $t-6$, and $t$ hours (14 total predictors) (See Appendix~\ref{appendix:TC_data} for details.) We denote the entire 12-hour input sequence of $3 \times 14$ values by  $\mathbf{S}_{\leq t}$ or just $\mathbf{S}$,  and the density of residual errors at $t+24$ hours by $f(\epsilon_{t+24}\mid \mathbf{S}_{\leq t})$ or just $f(\epsilon\mid \mathbf{S})$.

To learn the diagnostic transport maps, we use storms in the Atlantic basin from 2000-2015. We then evaluate the results on storms from 2016-2022. Table \ref{tab:DataSplits} lists available data (number of 36-hour sequences and number of unique TCs) for calibration and testing. Category 3-5 storm (hurricanes achieving a wind speed with more than 95 knots) constitute tail events. Because of rapid changes in tail probabilities, both RI and RW events are directly related to tail events.

\begin{longtable}[]{@{}lll@{}} 
\caption{\small{{\bf Calibration and test data in TC application.} (i) We train the diagnostic transport maps on 12-hour sequences of SHIPS predictors for Atlantic storms from 2000-2015. (ii) We assess the performance of $t$+24 hour intensity error distributions on storms from 2016-2022. The table lists the number of available sequences and the number of unique TCs. Hurricane categories follow the Saffir-Simpson scale with division into tropical storm (TS) and categories 1-5. Rapid intensification (RI) and rapid weakening (RW) events are storms that change by more than 30 knots within 24 hours.}}\label{tab:DataSplits}\tabularnewline
\toprule\noalign{}
 & \# Sequences & \# Unique TCs  \\
\midrule\noalign{}
\endfirsthead
\toprule\noalign{}
 & \# Sequences & \# Unique TCs \\
\midrule\noalign{}
\endhead
\bottomrule\noalign{}
\endlastfoot
(i) Calibration (2000-2015) & & \\
Total & 2395 & 171 \\
RI & 204 & 59\\
RW & 60 & 25\\
TS & 1159 & 167\\
Categories 1-2 & 519 & 90\\
Categories 3-5 & 271 & 36\\
\hline 
(ii) Testing (2016-2022) & & \\
Total & 1274 & 95\\
RI & 120 & 28\\
RW & 30 & 13\\
TS & 664 & 91\\
Categories 1-2 & 221 & 44\\
Categories 3-5 & 161 & 21\\
 \end{longtable}

\subsection{Results} 
We fit a semiparametric diagnostic transport map to the calibration data by positing the Kumaraswamy distribution (Equation~\ref{eq:kumaraswamy}) on $Z \mid x$, where $Z=\hat F(Y|X)$ is the conditional PIT variable, and then learning the parameter map $x \mapsto \theta(x)$ with a three-layer convolutional neural network that feeds into a five-layer fully connected neural network with ReLU activation, where the maximum number of units per layer is allowed to increase with the calibration data size $N$. To find the best value of $\theta(x)$, we minimize an inverse-probability-weighted (IPW) negative log-likelihood, where the IPW weights are computed by Gaussian kernel density estimation.\footnote{See scikit-learn documentation on sklearn.neighbors.KernelDensity for details.}

For comparison, we also implement the fully nonparametric LADaR approach outlined in \textcite{dey2025}.  For the latter probabilistic classification approach, we minimize the Brier score for a three-layer convolutional neural network feeding into a five-layer monotonic neural network with the number of units per layer varying based on calibration size $N$.

\subsubsection{Diagnostic insights}
The initial predictive distribution might capture some TC evolutionary modes better than others. Our methodology provides insights that can be connected back to the underlying physical mechanisms. Figure~\ref{fig:LDSAndPP} shows example diagnostics for the semiparametric diagnostic transport map. Figure~\ref{fig:Sebastian} illustrates how a forecaster can deploy diagnostic transport maps in real-time and assess whether the recommended corrections are consistent with physical processes and the storm's evolution. Semiparametric diagnostic transport maps improve the most upon NHC error distributions for intense storms and rapid intensity change events, which both involve tail probabilities. For example, the diagnostic plots in Figure~\ref{fig:LDSAndPP} identifies Point A as an input sequence with an unreliable initial predictive distribution. A closer look (see Figure~\ref{fig:Sebastian}) shows that Point A corresponds to a 12-hour sequence of Tropical Storm Sebastian (2019) when the storm had high zonal wind and low humidity, leading to a biased and overdispersed predictive distribution. Figure~\ref{fig:Sebastian} shows visuals for the corresponding evolutionary mode.

\begin{figure}
    \centering
    \includegraphics[width=0.75\linewidth]{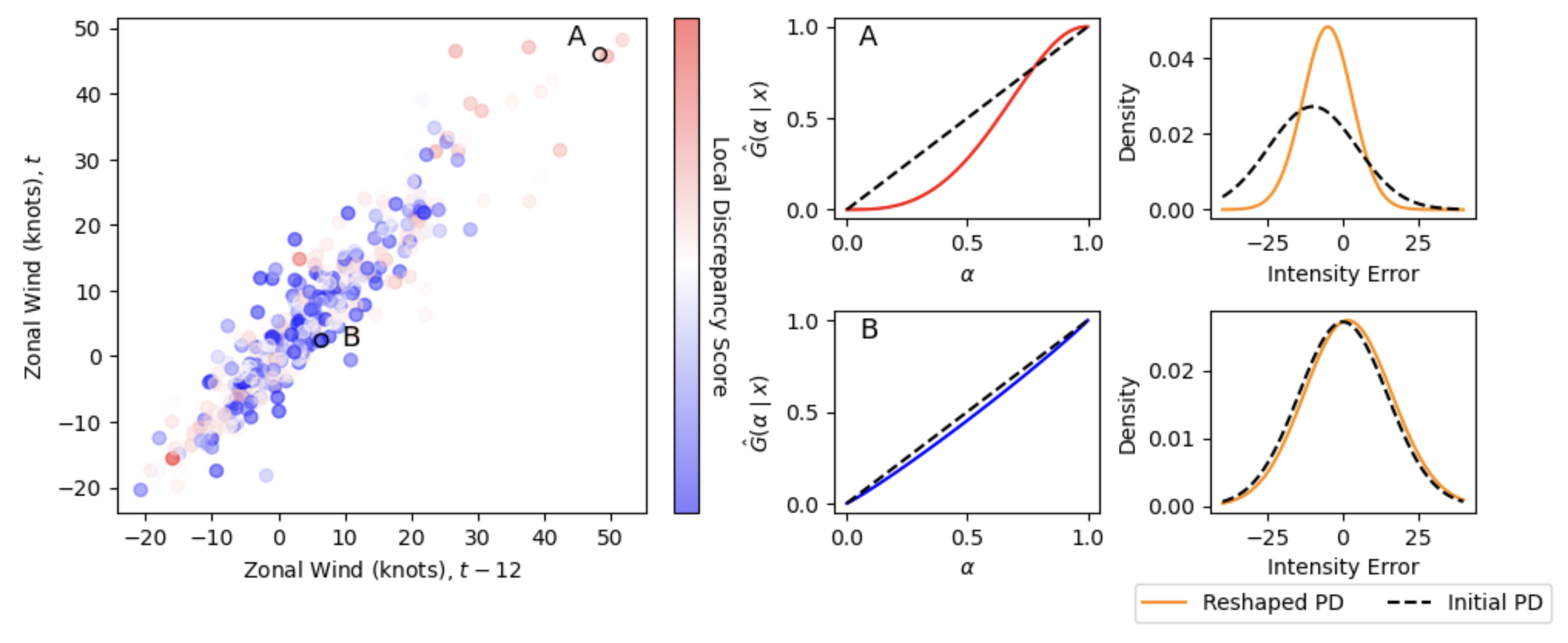}
    \caption{\small{{\bf Diagnostic insights for TC application.} {\em Left:} Each point in the graph represents a particular 12-hour input sequence $\mathbf{S}_{\leq t}=\{x_{t-12}, x_{t-6}, x_t\}$, where each $x$ corresponds to SHIPS predictors and TC intensity at a particular instance of the storm's evolution. With a semiparametric diagnostic transport map, we can quickly compute a local discrepancy score (LDS) that assesses how well the initial error distribution of $t+24h$ intensities matches the test data for that particular input sequence.  \textcolor{blue}{Point A} represents a storm sequence with low LDS (good fit).  \textcolor{red}{Point B} represents a storm sequence that has a high LDS (poor fit). For simplicity, we are  only displaying two SHIPS predictors here: the zonal wind at times $t-12$ and $t$ hours. {\em Center:} Diagnostic plots for test points A and B. Note that Panel A has a shape that indicates that the base model of +24-hour errors is negatively biased and overdispersed; that is,  the predictive distribution of TC intensities at t+24 hours is shifted downwards relative to the true TC intensity distribution with too high a variance.  {\em Right:} Recalibrated PDFs (\textcolor{orange}{orange}) of +24-hour intensity errors after applying the estimated transport maps to the base models.}}
    \label{fig:LDSAndPP}
\end{figure}

\subsubsection{Improved prediction and uncertainty quantification}
\begin{figure}
    \centering \includegraphics[width=0.75\linewidth]{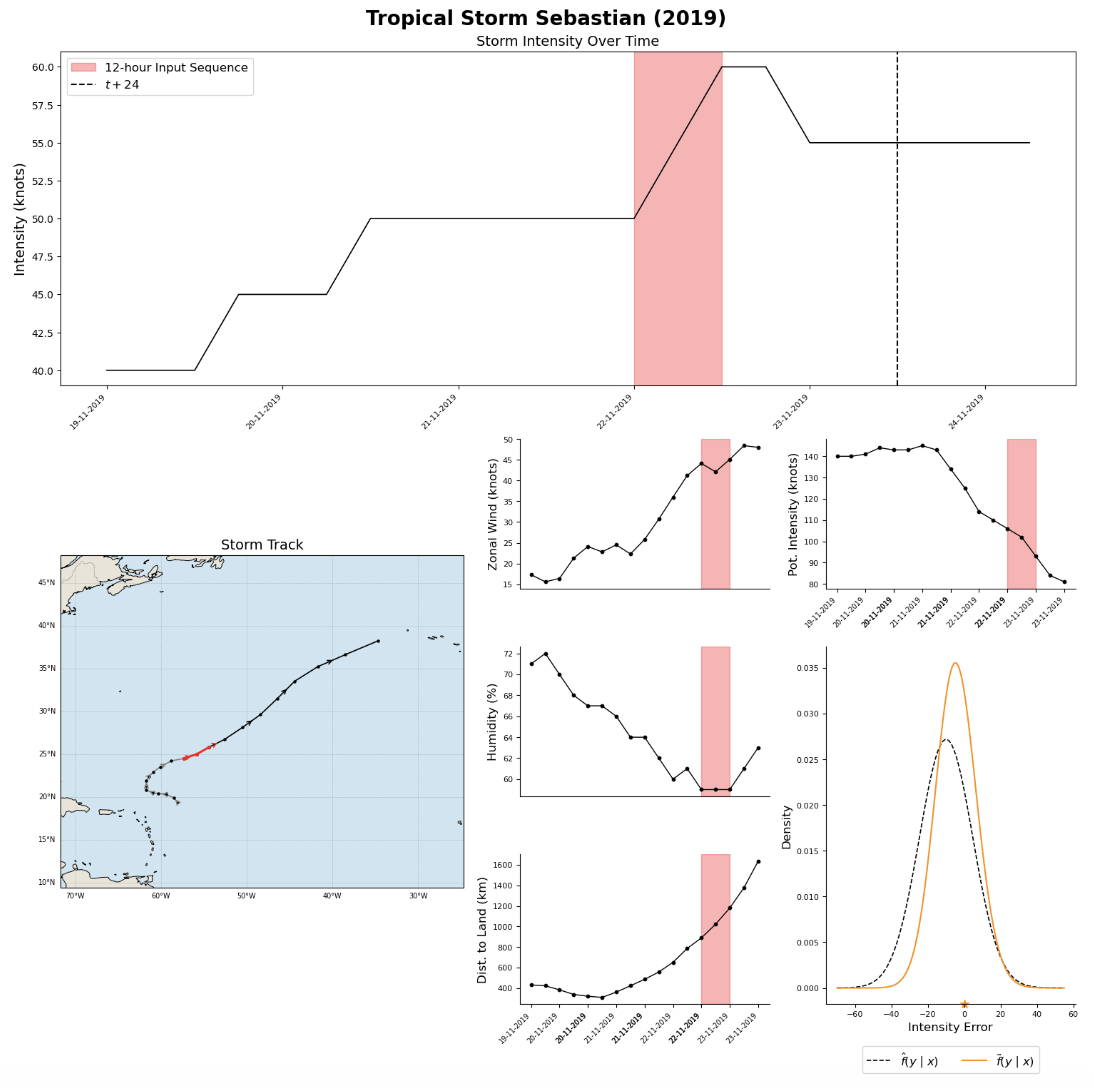}
    \caption{\small{{\bf Example dashboard for TC scientists and forecasters.} Diagnostic transport maps provide the user with local diagnostics and a mechanism for reshaping the base model in real time; no additional training is needed at deployment. Hence, a human expert can directly connect model output to physical processes and check whether the correction makes sense. This figure shows an example dashboard that can facilitate model verification for Tropical Storm Sebastian (2019). {\em Top:}  TC intensity as a function of time. The red band marks the time interval between $t-12h$, to $t$. The vertical dashed line marks the time $t+24h$ where we make our forecast. {\em Bottom left:} Track of Tropical Storm Sebastian (2019) with the current input sequence $t-12,...,t$ in red. {\em Bottom center:} 
    Four SHIPS predictors shown over time. Time points during which a rapid weakening event took place are marked with empty points. The red band marks the current input sequence. {\em Bottom right:} The error distribution $\hat f(\epsilon_{t+24h} \mid \mathbf{S}_{\leq t})$ from NHC official forecasts (black dashed) and the reshaped error distribution  $\widetilde f(\epsilon_{t+24h} \mid \mathbf{S}_{\leq t})$ after applying parametric transport maps  (\textcolor{orange}{solid orange}). Here the recalibration shifts the mean of the base density toward 0 knots (the true intensity error). For a video of our diagnostic transport maps applied to Tropical Storm Sebastian (2019) over time, see \href{https://drive.google.com/file/d/1asFsKGgfd6M58sY2WXtCOpRoflVhw7Sm/view?usp=drive_link}{here}.}}
    \label{fig:Sebastian}
\end{figure}

Next we examine whether semiparametric transport maps improve upon the NHC official forecasts,and compare the performance with fully nonparametric diagnostic transport maps. An analysis shows that reshaping the initial error distributions with SHIPS calibration data results in improved performance across a variety of metrics and subdivisions of the test data. The top half of Table~\ref{tab:CRPSRMSE} lists the continuous ranked probability score (CRPS; Section 1, Supplemental Materials), which is a commonly used measure of how well a full predictive distribution matches a realized outcome on average over a validation sample. The bottom half of the table lists the root mean square  error (RMSE), a measure of the predictive accuracy of a point forecast derived from the error distribution. Here we compute the RMSE of the expected value of the estimated error distribution in knots relative to the true error, where the latter is 0 by construction.

The table shows that semiparametric diagnostic maps improve upon the NHC forecasts in both CRPS and RMSE, when the values are averaged across the entire test set ("Overall") as well as when they are computed for storm subdivisions based on hurricane strengths or intensity change. The fully nonparametric maps are less successful but still decrease the RMSE across TC categories and the CRPS. For the TC application, the most notable improvement in NHC results occur for storms that undergo rapid intensity changes (RI and RW). Our diagnostic tools enable scientists and forecasters to connect improvements in uncertainty quantification to real-time storm evolutions based on prior knowledge of NHC predictive performance under similar conditions. This allows scientists to not only track how their initial forecasts differ from calibration data in high-impact settings, but also understand why proposed changes to the PD are necessary based on environmental and evolutionary factors.} 

\begin{longtable}[]{@{}llll@{}}
\caption{\small{{\bf Predictive performance of NHC operational model before and after recalibration.} The CRPS (scaled by a multiplicative factor of 100) ({\em top}) and the RMSE ({\em bottom}) of the TC intensity error distribution from the original base model (``NHC operational forecasts''), and after a recalibration with diagnostic transport maps (``Nonparametric map'' and ``Semiparametric map'') with the numbers in parenthesis denoting percentage improvement over the NHC operational forecast and bold-faced numbers denoting the best result in each category. Diagnostic transport maps improve the NHC error distribution in both CRPS and RMSE across the board with the largest improvements seen in the semiparametric maps.}}\label{tab:CRPSRMSE}\tabularnewline
\toprule\noalign{}
Category & NHC Operational forecast & Nonparametric map & Semiparametric map  \\
\midrule\noalign{}
\endfirsthead
\toprule\noalign{}
Category & NHC Operational forecast & Nonparametric map & Semiparametric map  \\
\midrule\noalign{}
\endhead
\bottomrule\noalign{}
\endlastfoot
 \underline{\bf CRPS} & & & \\
Overall & 6.85 & 6.65 (-2.92\%) & \textbf{6.50 (-5.10\%)}\\
TS & 5.80 & 5.73 (-1.21\%) & \textbf{5.71 (-1.55\%)}\\
Categories 1-2 & 8.52 & 8.22 (-3.52\%) & \textbf{7.81 (-8.33\%)}\\
Categories 3-5 & 8.79 & 8.31 (-5.46\%) & \textbf{7.69 (-12.51\%)}\\
RI & 8.73 & 8.38 (-4.00\%) & \textbf{7.99 (-8.48\%)}\\
RW & 10.68 & 10.48 (-1.87\%) & \textbf{9.46 (-11.42\%)}\\
\hline \hline 
\underline{\bf RMSE}  & & & \\
Overall & 12.44 & 11.03 (-11.33\%) & \textbf{10.97 (-11.82\%)}\\
TS & 10.43 & \textbf{9.52 (-8.72\%)} & 9.76 (-6.42\%)\\
Categories 1-2 & 15.47 & 13.53 (-12.54\%) & \textbf{13.12 (-15.19\%)}\\
Categories 3-5 & 15.47 & 13.22 (-14.54\%) & \textbf{12.46 (-19.46\%)}\\
RI & 16.52 & 14.34 (-13.20\%) & \textbf{13.93 (-15.68\%)}\\
RW & 18.22 & 16.02 (-12.07\%) & \textbf{14.16 (-22.28\%)}\\
 \end{longtable}
 
\section{Discussion} 
In this paper, we show how diagnostic transport maps can identify input patterns or evolutionary modes associated with poor calibration of predictive distributions, and how these maps then can morph the initial model toward locally calibrated distributions. In particular, we demonstrate the effectiveness of a semiparametric version for settings where calibration data are limited and accurate modeling of tail events is consequential. Here, ``semiparametric'' refers to imposing a low-dimensional parametric family on the probability-to-probability correction that is applied to the initial predictive distribution of $Y\mid X = x$, while still allowing the parameters of this correction to vary flexibly with $x$. For tropical cyclone intensity forecasting, semiparametric and nonparametric diagnostic transport maps significantly improve operational NHC error distributions both on average and across a spectrum of extreme events. 

Although our diagnostic transport framework was originally motivated by severe weather forecasting, the proposed framework can be used in any application where scientists and forecasters need easy-to-use tools to assess and recalibrate black-box forecasting models with respect to target data of interest. Semiparametric maps are most useful in {\em small-sample} settings, whereas nonparametric maps become more attractive when larger calibration samples are available, allowing greater flexibility in reshaping the base model. We recommend fully nonparametric transport maps if the base model is nonparametric and, for example, computed by a generative AI model. Future work on local diagnostics and reshaping of predictive distributions involves extending diagnostic transport maps to multivariate response variables and spatio-temporal processes, as well as quantifying the added value of incorporating additional data products as inputs, such as satellite imagery for the TC intensity forecasting problem.

\if1\anon
{
\section{Acknowledgements}
The authors would like to thank the STAtistical Methods for the Physical Sciences (STAMPS) Research Center at Carnegie Mellon University for support. ABL is grateful to Biprateep Dey, Mikael Kuusela, Jing Lei, and Larry Wasserman for valuable discussions on the original LADaR framework. The authors would also like to thank Dr. Kimberly Wood from the University of Arizona Department of Hydrology and Atmospheric Sciences and Dr. Tria McNeely at Microsoft for their helpful feedback and guidance on the TC application. Part of this work was funded by NSF DMS-2053804. RI is grateful for the financial support of CNPq (422705/2021-7, 305065/2023-8 and 403458/2025-0) and FAPESP (grant 2023/07068-1).
}
\fi 

\section{Data availability statement}
The data that support the findings of this study are openly available at the following URLs: \href{https://rammb2.cira.colostate.edu/research/tropical-cyclones/ships/development_data/}{SHIPS - Developmental Data}, \href{https://www.nhc.noaa.gov/verification/verify7.shtml}{NHC Forecast Verification Data}, \href{https://www.aoml.noaa.gov/hrd/hurdat/Data_Storm.html}{HURDAT 2 Reanalysis Data}.

\section{AI disclosure statement}
During the preparation of this work, the authors used Anthropic’s Claude Sonnet 4.6 to summarize and edit content. After using this tool, the authors reviewed and edited the content as needed and take full responsibility for the contents of the publication.

\printbibliography

\begin{appendix}

\section{Notation}\label{sec:notation}
\begin{longtable}[]{@{}ll@{}}
\caption{\bf{List of symbols}}\label{tab:terminology}\tabularnewline
\toprule\noalign{}
Symbol & Meaning  \\
\midrule\noalign{}
\endfirsthead
\toprule\noalign{}
Symbol & Meaning  \\
\midrule\noalign{}
\endhead
\bottomrule\noalign{}
\endlastfoot

$X$ & covariates \\
$Y$ & response variable \\
$F$& CDF of the true predictive distribution of $Y\mid X$ \\
$\hat F$ & CDF of the initial base model for the distribution of $Y \mid X$ \\
$Z$ & local PIT variable, $Z :=\hat{F}(Y\mid X)$\\
$G$ & PIT-CDF, $G(\alpha \mid x) := \mathbb{P}(Z \leq \alpha \mid x)$ \\
$\hat{G}$& estimated PIT-CDF\\
$\theta$ & parameter for the parametric family imposed on $Z \mid X$\\
$G_{\theta}$ & PIT-CDF parameterized by $\theta$, $G(\alpha \mid x) \approx G_{\theta(x)}(\alpha)$\\
$\hat\theta$ & estimated parameter for $G_\theta$\\
$\tilde{F}$& reshaped CDF, $\tilde{F}(y \mid x) := \hat G \left( \hat F(y \mid x) \mid x \right)$\\
\end{longtable}

\section{Local diagnostics via PIT-CDF}\label{appendix:diagnostics}

Here we review the local diagnostics framework for conditional density models that first occured in \textcite{zhao2021}.
\subsection{Local calibration} Given a fixed baseline predictive distribution $\hat{F}_{Y\mid X}$, we define the conditional
PIT-CDF by
\begin{equation} 
G(\alpha \mid x) 
:= \mathbb{P}\left( \widehat{F}(Y|X) \le \alpha \mid x  \right),
\end{equation}
for $\alpha \in [0,1]$. The base model is {\em locally calibrated} at a fixed covariate value $x \in \mathcal{X}$
---that is, the predictive PDF $\hat f(y|x)=f(y|x)$ for every $y \in \mathcal{Y}$--- 
if, and only if, $\hat{F}(Y|X) \mid X=x$ is uniformly distributed on $[0,1]$, or equivalently
\[
G(\alpha \mid x) = \alpha \text{ for every }\alpha \in [0,1].
\]

\subsection{Diagnostic plots} To visually inspect whether our base model $\hat F_{Y\mid X}$ is locally calibrated at a given point $x \in \mathcal{X}$, we first evaluate our estimated PIT-CDF regression model $\hat G$ for an IID test sample $\{X_i, Y_i \}_{i=1}^n$, where each $X_i$ is drawn from some distribution over $\mathcal{X}$, and $Y_i|X_i$ is drawn from the true predictive distribution $F_{Y|X}$.\footnote{Calibration and test data typically represent independent samples from the same data-generating process $F_{Y\mid X}$. Both data sets are assumed to be independent from the data used to learn the base model $\hat F_{Y\mid X}$.} We then plot $\hat G(\alpha|x)$ versus $\alpha$, for a grid of $\alpha$-values in $[0,1]$. Figure~\ref{fig:cheatsheet} gives an example of how these ``diagnostic probability-probability plots'' can display failure models like bias, dispersion, skewness, and tailweight of the base model $\hat{F}_{Y|X}$, relative to the true predictive distribution $F_{Y|X}$ of the test data.\\

\begin{figure}
    \centering
    \includegraphics[width=0.8\linewidth]{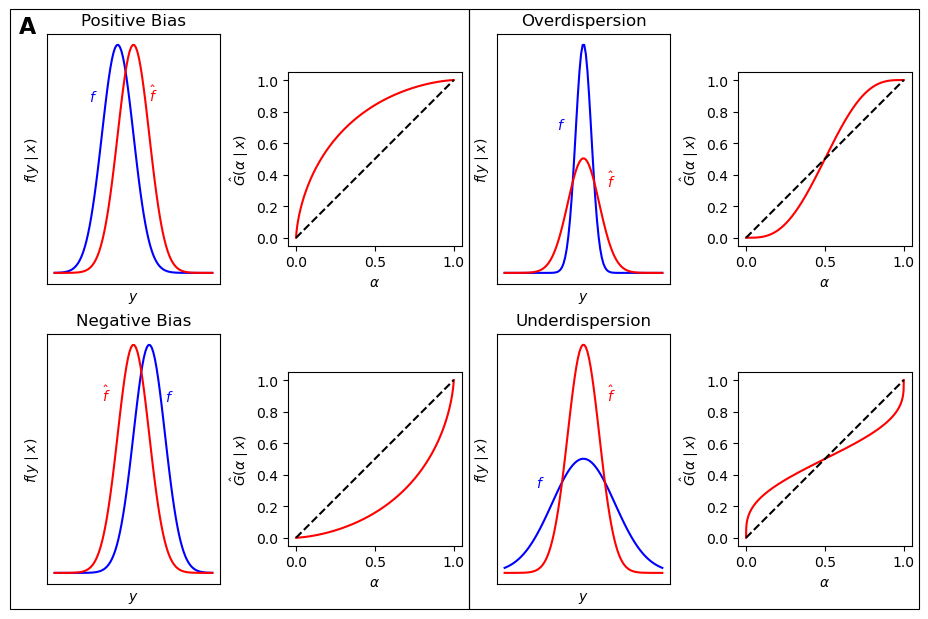}
    \includegraphics[width = 0.7\linewidth]{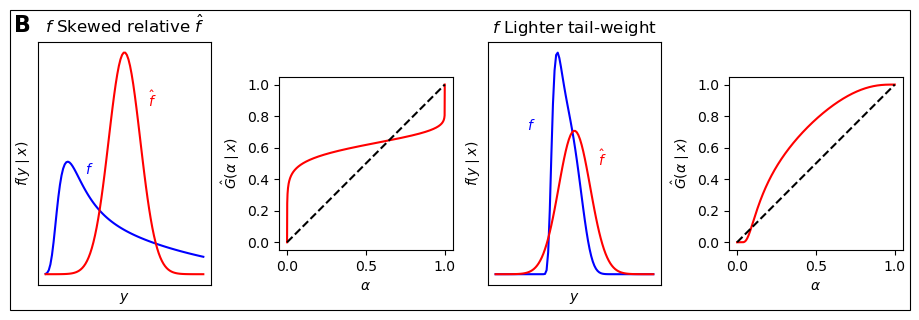}
    \caption{\small{{\bf{Recalibration of different failure modes in the PD.}} Examples showcasing how different failure modes of the initial density model $\hat{f}$ are diagnosed using probability-probability plots that map $\alpha \to \hat{G}(\alpha\mid x)$, where $\hat{G}(\cdot\mid x)$ is the estimated conditional PIT-CDF. {\em Panel A:} To the left, the initial density $\hat f$ is shifted toward larger outcome values (``Positive bias'') or toward smaller outcome values (``Negative bias'') relative to the true density $f$. To the right,  the initial density $\hat f$ is too wide (``Overdispersion'') or too narrow (``Underdispersion'') relative to the true density $f$.   {\em Panel B:} In these examples, $\hat f$ and $f$ have the same mean and variance. To the left, the true unknown density $f$ (\textcolor{blue}{blue}) is skewed to the right; hence a diagnostic plot that signals both positively bias and underdispersion.  To the right, the true unknown density $f$ (\textcolor{blue}{blue}) is also skewed to the right but with lighter tail-weight; hence a diagnostic plot that signals positive bias with slight overdispersion.}}
    \label{fig:cheatsheet}
\end{figure}

\subsection{Local discrepancy score} We define a {\em one-number summary statistic} to assess how well-calibrated the base model $\widehat{F}$ is, for a fixed $x \in \mathcal{X}$, according to our estimated PIT-CDF. More specifically, we compute the  {\em local discrepancy score} (LDS) according to 
\begin{equation}\label{eq:LDS}
    \text{LDS}(x) = \frac{1}{|\Gamma|}\sum_{\alpha \in \Gamma} \left(\hat{G}( \alpha \mid x)- \alpha \right)^2 ,
\end{equation}
where $\Gamma$ represents a set of points on an evenly spaced grid over $[0,1]$.  Geometrically, the LDS corresponds to the sum-of-squared deviation between the estimated PIT-CDF curve and the bisector in the local diagnostic plots. Note that when the PIT-CDF is used as a ``transport map'' to reshape the base model, we can directly relate the error in 
the reshaped predictive distribution \(\widetilde F\) at a point
\((x,y)\) to the discrepancy between the true transport map \(G\) and its parametric approximation at the probability \(p := \hat F(y \mid x) \in [0,1]\); see Corollary~\ref{cor:thresholded-forecasts} for details.

\section{Proofs}\label{app:proofs}

\begin{proof}[Proof of Lemma~\ref{lem:oracle-intro}]
Since $\hat{F}(\cdot\mid x)$ is strictly increasing, the events $\{Y\leq y\}$ and $\{\hat{F}(Y\mid x)\leq \hat{F}(y\mid x)\}$ coincide. Thus, $F(y\mid x) = \mathbb{P}(Y\leq y\mid X = x|x) = \mathbb{P}(\hat{F}(Y\mid x) \leq \hat{F}(y\mid x)|x) = G(\hat{F}(y\mid x)\mid x)$
\end{proof}

\begin{proof}[Proof of Corollary~\ref{cor:thresholded-forecasts}]
By Lemma~\ref{lem:oracle-intro}, \[F(y\mid x) = G(p\mid x), \quad \tilde{F}(y\mid x) = G_{\hat\theta}(p\mid x),\] so that the pointwise calibration error is \[|F(y\mid x) - \tilde F(y\mid x)| = |G(p\mid x) - G_{\hat\theta}(p\mid x)|.\] The conclusion follows from the triangle inequality.
\end{proof}

\begin{proof}[Proof of Proposition \ref{prop:stats-decomp}]
Write $\alpha=\hat F(y\mid x)\in[0,1]$.  By
Lemma~\ref{lem:oracle-intro}, $F(y\mid x)=G(\alpha\mid x)$, and by
construction $\tilde F(y\mid x)=G_{\hat\theta(x)}(\alpha)$.  Inserting the
intermediate term $G_{\theta^\star(x)}(\alpha)$ and applying the triangle
inequality gives the first display.  Taking the supremum over $y$ yields the second.
\end{proof}

\begin{proof}[Proof of Proposition~\ref{prop:stats-rate}]
By Proposition~\ref{prop:stats-decomp},
\[
\sup_y|\tilde F(y\mid x)-F(y\mid x)|\le A_\infty(x)+E_\infty(x).
\]
By Assumption~\ref{ass:lipschitz-stats},
\[
E_\infty(x)
\le L\|\hat\theta(x)-\theta^\star(x)\|.
\]
Assumption~\ref{ass:theta-rate-stats} gives
\[
E_\infty(x)=O_P\{r_N(x)\}.
\]
\end{proof}

\begin{proof}[Proof of Proposition \ref{prop:sp-tail-rate}]
The triangle inequality gives
$$
|\hat p_{\rm sp}(x)-p_N(x)|
\le
A_N(x)
+
\left|
G_{\hat\theta(x)}(\alpha_N)
-
G_{\theta^\star(x)}(\alpha_N)
\right|.
$$
By Assumption~\ref{ass:theta-rate-stats},
$$
\|\hat\theta(x)-\theta^\star(x)\|=O_P\{r_N(x)\}.
$$
Since $S(\alpha_N,x)r_N(x)\to0$,
$$
S(\alpha_N,x)\|\hat\theta(x)-\theta^\star(x)\|=o_P(1),
$$
and therefore, with probability tending to one,
$$
\|\hat\theta(x)-\theta^\star(x)\|\le \frac{\rho}{S(\alpha_N,x)}.
$$
On this event, Assumption~\ref{ass:relative-tail-stability} gives
$$
\left|
G_{\hat\theta(x)}(\alpha_N)
-
G_{\theta^\star(x)}(\alpha_N)
\right|
\le
C\,S(\alpha_N,x)\,
G_{\theta^\star(x)}(\alpha_N)\,
\|\hat\theta(x)-\theta^\star(x)\|.
$$
Dividing by $p_N(x)$ and using the rate for $\hat\theta(x)$ yields
$$
\frac{|\hat p_{\rm sp}(x)-p_N(x)|}{p_N(x)}
\le
\frac{A_N(x)}{p_N(x)}
+
O_P\!\left\{
S(\alpha_N,x)r_N(x)
\frac{G_{\theta^\star(x)}(\alpha_N)}{p_N(x)}
\right\}.
$$
Finally,
$$
G_{\theta^\star(x)}(\alpha_N)
\le
G(\alpha_N\mid x)
+
\left|
G_{\theta^\star(x)}(\alpha_N)-G(\alpha_N\mid x)
\right|
=
p_N(x)+A_N(x),
$$
so
$$
\frac{G_{\theta^\star(x)}(\alpha_N)}{p_N(x)}
\le
1+\frac{A_N(x)}{p_N(x)}.
$$
Substitution gives the result.
\end{proof}

\begin{lemma}\label{lem:true_conditional_pit}
    Fix $x \in \mathcal{X}$ and assume that the initial predictive CDF $\hat{F}(\cdot \mid x)$ is strictly increasing and continuous in $y \in \mathcal{Y}$, with $\hat{F}$ strictly bounded between [0,1]. For every $\alpha \in [0, 1]$
    \[G(\alpha \mid x) = F(\hat{F}^{-1}(\alpha\mid x)\mid x) .\]
Finally, $F(y \mid x) = \hat F(y \mid x)$ for all $y \in \mathcal{Y}$, if and only if, $G(\alpha \mid x)=\alpha$ for all $\alpha \in [0,1]$.
\end{lemma}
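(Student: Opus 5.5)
The plan has two parts: derive the formula for $G$ from the definition of the PIT and the invertibility of $\hat F(\cdot\mid x)$, then read off the calibration equivalence.

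\emph{Step 1: the formula for $G$.} Since $\hat F(\cdot\mid x)$ is continuous and strictly increasing on $\mathcal{Y}$, it is a bijection from $\mathcal{Y}$ onto its range, an interval inside $[0,1]$. On that range it has a continuous, strictly increasing inverse $\hat F^{-1}(\cdot\mid x)$. For $\alpha$ in the range, monotonicity gives the event identity
\[
\{\hat F(Y\mid x)\le \alpha\}=\{Y\le \hat F^{-1}(\alpha\mid x)\}.
\]
This is the same identity used in the proof of Lemma~\ref{lem:oracle-intro}, with $y=\hat F^{-1}(\alpha\mid x)$. Conditioning on $X=x$, where $Z=\hat F(Y\mid x)$, yields
\[
G(\alpha\mid x)=\mathbb P(Y\le \hat F^{-1}(\alpha\mid x)\mid X=x)=F(\hat F^{-1}(\alpha\mid x)\mid x).
\]
Equivalently, apply Lemma~\ref{lem:oracle-intro} at $y=\hat F^{-1}(\alpha\mid x)$ and use $\hat F(\hat F^{-1}(\alpha\mid x)\mid x)=\alpha$.

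\emph{Step 2: the endpoints.} The main obstacle is $\alpha\in\{0,1\}$ when the range of $\hat F(\cdot\mid x)$ is the open interval $(0,1)$. I read the hypothesis that $\hat F$ is strictly bounded between $0$ and $1$ as exactly this case. There I would extend $\hat F^{-1}(0\mid x)=\inf\mathcal{Y}$ and $\hat F^{-1}(1\mid x)=\sup\mathcal{Y}$, with $F$ evaluated at these points understood as a limit. Then both sides agree: $\mathbb P(Z\le 0\mid x)=0$ because $Z$ never equals $0$, and $\mathbb P(Z\le 1\mid x)=1$ trivially. If the range is a proper subinterval, the formula needs only to be checked for $\alpha$ in that range, and the same extension handles the boundary. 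Away from these edge cases the argument is mechanical.

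\emph{Step 3: the equivalence.} For ($\Rightarrow$), if $F(\cdot\mid x)=\hat F(\cdot\mid x)$, Step 1 gives $G(\alpha\mid x)=\hat F(\hat F^{-1}(\alpha\mid x)\mid x)=\alpha$ on the interior, and Step 2 covers the endpoints. For ($\Leftarrow$), if $G(\alpha\mid x)=\alpha$ for all $\alpha$, Lemma~\ref{lem:oracle-intro} gives $F(y\mid x)=G(\hat F(y\mid x)\mid x)=\hat F(y\mid x)$ for every $y\in\mathcal{Y}$. This direction needs no inversion at all.
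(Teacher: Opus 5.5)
Your proof is correct and follows the paper's route: invert $\hat F(\cdot\mid x)$ to get $\{\hat F(Y\mid x)\le\alpha\}=\{Y\le \hat F^{-1}(\alpha\mid x)\}$, condition on $X=x$, and then read off the calibration equivalence, which the paper handles in one line by noting that a CDF characterizes its distribution. Your explicit treatment of the endpoints $\alpha\in\{0,1\}$ and your use of Lemma~\ref{lem:oracle-intro} for the converse direction add care that the paper's proof omits, but they do not change the argument.
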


\begin{proof}
   By Definition~\ref{eq:cond-pit-cdf},
    \[G(\alpha\mid X = x) = \mathbb{P} \left(\hat{F}(Y \mid X)\leq \alpha \mid X = x \right) .\]
     Since $\hat{F}$ is a continuous, monotonically increasing function, we can compute its inverse $\hat F^{-1}$. Hence,
    \begin{align*}  G(\alpha\mid X = x) &= \mathbb{P} \left(Y \leq \hat F^{-1}(\alpha \mid X) \mid X=x \right) \\
   &=  F(\hat{F}^{-1}(\alpha\mid X)\mid X=x)
    \end{align*}
\end{proof}
The final result follows from the fact that the CDF fully characterizes the distribution of a random variable.

\section{Details on the TC data}\label{appendix:TC_data}

For calibration, we use the ``NHC forecast error''~\parencite{nhcDatabase}, which is defined as the residual of the NHC's operational intensity estimate relative to the best-track intensity data provided by NOAA's Hurricane Database 2 (HURDAT2)~\parencite{pastracks}. The 2022 SHIPS predictor file~\parencite{shipsPredictorFile} contains approximately 90 environmental predictors for the forecasting of hurricane intensity. The input sequence $\mathbf{S}_{\leq t}$ used to forecast the TC intensity error distribution $f(\epsilon_{t+24} \mid \mathbf{S}_{\leq t})$ at a 24-hour lead time include the TC intensity (in knots) and a subset of the following 13 SHIPS predictors at $t-12$, $t-6$ and $t$ hours: 
\begin{itemize}
    \item Maximum surface wind (knots)
    \item Shear magnitude (knots * 10) from 850-200 hPa
    \item Shear magnitude (knots * 10) from 850-200 hPa at 200-800 km
    \item Shear magnitude (knots * 10) with vortex removed and averaged from 0-500 km relative to the 850 hPa vortex center
    \item Relative humidity (\%) from 850-700 hPa at 200-800 km
    \item Relative humidity (\%) from 700-500 hPa at 200-800 km
    \item Relative humidity (\%) from 500-300 hPa at 200-800 km
    \item Maximum potential intensity from the Kerry Emanuel equation (knots)
    \item Zonal wind (knots * 10) at 200 hPa for 200-800 km
    \item Intensity change (knots) from $t=0$
    \item Distance to nearest major land mass (km)
    \item Latitude (degrees North * 10)
    \item Longitude (degrees West * 10)
\end{itemize}

\noindent SHIPS predictors are matched to intensity errors from the NHC's operational forecast through dates, times, latitudes, and longitudes. If a particular SHIPS predictor is null (represented by a 9999 in the file) for $t-12, t-6, $ and $t$, or if the NHC failed to report an intensity error at $t+24$, that storm sequence is removed entirely from the data set. If a subset of the data is missing, missing values were interpolated using the previous time point's measurement. Storm sequences are only considered once they are classified as a tropical cyclone (intensity $>34$ knots); tropical depressions are not considered.
\end{appendix}
\end{document}